\newtheorem{lemma}{Lemma}
\newcommand{\thickhline}{%
	\noalign {\ifnum 0=`}\fi \hrule height 1pt
	\futurelet \reserved@a \@xhline
}
\newcommand{\dprime}{{\prime\prime}}
\definecolor{newgreen}{RGB}{0,176,80}
\definecolor{newblue}{RGB}{0,176,240}
\Crefname{section}{Section}{Sections}
\Crefname{table}{Table}{Tables}
\crefname{table}{Tab.}{Tabs.}
\ificcvfinal\pagestyle{empty}\fi
\begin{document}

%%%%%%%%% TITLE
\title{Towards Memory- and Time-Efficient Backpropagation for Training Spiking Neural Networks}
\author{
	Qingyan Meng$^{1,2}$, Mingqing Xiao$^3$, Shen Yan$^{4}$, Yisen Wang$^{3,5}$, Zhouchen Lin$^{3,5,6,}$\thanks{Corresponding author.} , Zhi-Quan Luo$^{1,2}$\\
	$^1$The Chinese University of Hong Kong, Shenzhen \
	$^2$Shenzhen Research Institute of Big Data\\
	$^3$National Key Lab. of General AI, School of Intelligence Science and Technology, Peking University \\
	$^4$Center for Data Science, Peking University \
	$^5$Institute for Artificial Intelligence, Peking University\\
	$^6$Peng Cheng Laboratory\\
	{\tt \small qingyanmeng@link.cuhk.edu.cn, \{mingqing\_xiao, yanshen, yisen.wang, zlin\}@pku.edu.cn,} \\ {\tt \small luozq@cuhk.edu.cn}
}
\maketitle
% Remove page # from the first page of camera-ready.
\ificcvfinal\thispagestyle{empty}\fi

%%%%%%%%% ABSTRACT
\begin{abstract}
	Spiking Neural Networks (SNNs) are promising energy-efficient models for neuromorphic computing. For training the non-differentiable SNN methods, the backpropagation through time (BPTT) with surrogate gradients (SG) method has achieved high performance. However, this method suffers from considerable memory cost and training time during training. In this paper, we propose the Spatial Learning Through Time (SLTT) method that can achieve high performance while greatly improving training efficiency compared with BPTT. First, we show that the backpropagation of SNNs through the temporal domain contributes just a little to the final calculated gradients. Thus, we propose to ignore the unimportant routes in the computational graph during backpropagation. 
	The proposed method reduces the number of scalar multiplications and achieves a small memory occupation that is independent of the total time steps. Furthermore, we propose a variant of SLTT, called SLTT-K, that allows backpropagation only at $K$ time steps, then the required number of scalar multiplications is further reduced and is independent of the total time steps. Experiments on both static and neuromorphic datasets demonstrate superior training efficiency and performance of our SLTT. In particular, our method achieves state-of-the-art accuracy on ImageNet, while the memory cost and training time are reduced by more than 70\% and 50\%, respectively, compared with BPTT. Our code is available at \url{https://github.com/qymeng94/SLTT}.

\end{abstract}

%%%%%%%%% BODY TEXT
\vspace{-5pt}
\section{Introduction}
\label{sec:intro}

\begin{figure}[h]
	\centering
	\includegraphics[width=0.95\columnwidth]{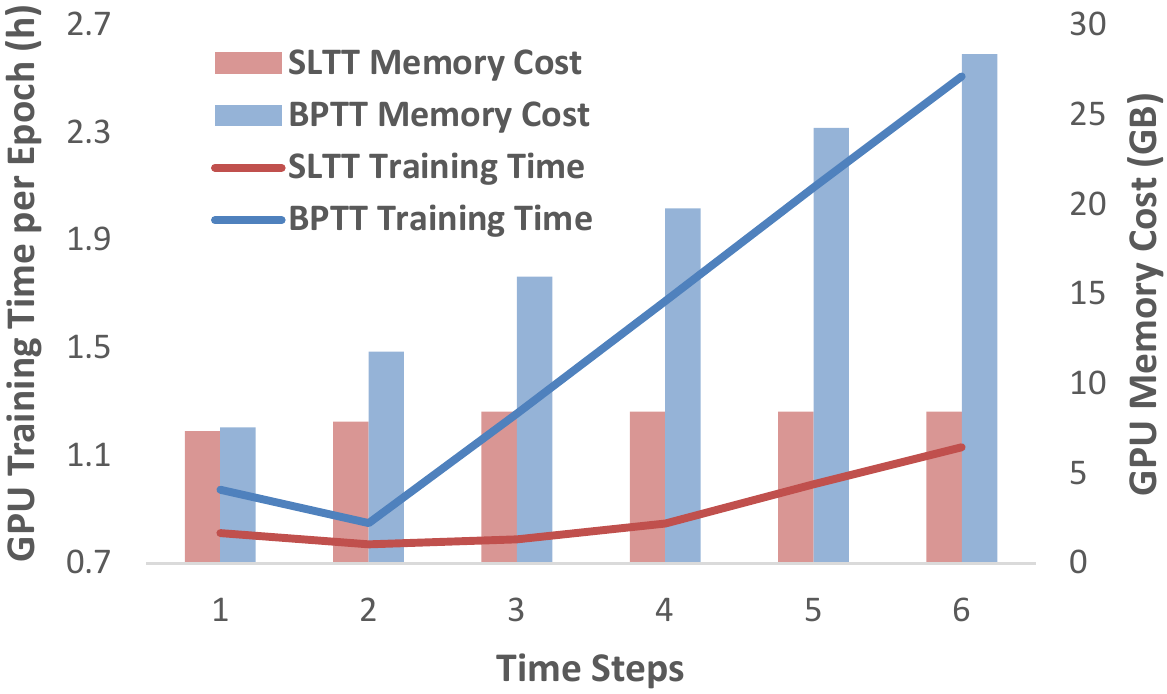} 
	\caption{ \small The training time and memory cost comparison between the proposed SLTT-1 method and the BPTT with SG method on ImageNet. SLTT-1 achieves similar accuracy as BPTT, while owning better training efficiency than BPTT both theoretically and experimentally. Please refer to \cref{sec:SLTT,sec:experiments} for details.
	}
	\label{fig:time_mem}	
\end{figure}

Regarded as the third generation of neural network models \cite{maass1997networks}, Spiking Neural Networks (SNNs) have recently attracted wide attention. SNNs imitate the neurodynamics of power-efficient biological networks, where neurons communicate through spike trains (\ie, time series of spikes). A spiking neuron integrates input spike trains into its membrane potential. After the membrane potential exceeds a threshold, the neuron fires a spike and resets its potential \cite{gerstner2014neuronal}.
The spiking neuron is active only when it experiences spikes, thus enabling event-based computation. This characteristic makes SNNs energy-efficient when implemented on neuromorphic chips~\cite{merolla2014million,davies2018loihi,pei2019towards}. 
As a comparison, the power consumption of deep Artificial Neural Networks (ANNs) is substantial.

The computation of SNNs with discrete simulation can share a similar functional form as recurrent neural networks (RNNs) \cite{neftci2019surrogate}. The unique component of SNNs is the non-differentiable threshold-triggered spike generation function. The non-differentiability, as a result, hinders the effective adoption of gradient-based optimization methods that can train RNNs successfully. Therefore, SNN training is still a challenging task. Among the existing SNN training methods, backpropagation through time (BPTT) with surrogate gradient (SG) \cite{cramer2022surrogate,sengupta2019going} has recently achieved high performance on complicated datasets in a small number of time steps (\ie, short length of spike trains). The BPTT with SG method defines well-behaved surrogate gradients to approximate the derivative of the spike generation function. Thus the SNNs can be trained through the gradient-based BPTT framework \cite{werbos1990backpropagation}, just like RNNs. With such framework, gradients are backpropagated through both the layer-by-layer spatial domain and the temporal domain. Accordingly, BPTT with SG suffers from considerable memory cost and training time that are proportional to the network size and the number of time steps. The training cost is further remarkable for large-scale datasets, such as ImageNet.

In this paper, we develop the Spatial Learning Through Time (SLTT) method that can achieve high performance while significantly reducing the training time and memory cost compared with the BPTT with SG method. We first decompose the gradients calculated by BPTT into spatial and temporal components. With the decomposition, the temporal dependency in error backpropagation is explicitly presented. 
We then analyze the contribution of temporal information to the final calculated gradients, and propose the SLTT method to delete the unimportant routes in the computational graph for backpropagation. In this way, the number of scalar multiplications is reduced; thus, the training time is reduced. SLTT further enables online training by calculating gradient instantaneously at each time step, without the requirement of storing information of other time steps. Then the memory occupation is independent of the number of total time steps, avoiding the significant training memory costs of BPTT.
Due to the instantaneous gradient calculation, we also propose the SLTT-K method that conducts backpropagation only at $K$ time steps. SLTT-K can further reduce the time complexity without performance loss.
With the proposed techniques, we can obtain high-performance SNNs with superior training efficiency. The wall-clock training time and memory costs of SLTT-1 and BPTT on ImageNet under the same experimental settings are shown in \cref{fig:time_mem}. 
Formally, our contributions include:
\begin{itemize}
	\item[1.] Based on our analysis of error backpropagation in SNNs, we propose the Spatial Learning Through Time (SLTT) method to achieve better time and memory efficiency than the commonly used BPTT with SG method. Compared with the BPTT with SG method, the  number of scalar multiplications is reduced, and the training memory is constant with the number of time steps, rather than grows linearly with it.
	
	\item[2.] Benefiting from our online training framework, we propose the SLTT-K method that further reduces the time complexity of SLTT. The required number of scalar multiplication operations is reduced from $\Omega(T)$\footnote{$f(x)=\Omega(g(x))$ means that there exist $c>0$ and $n>0$, such that $0\le cg(x)\le f(x)$ for all $x\ge n$.} to $\Omega(K)$, where $T$ is the number of total time steps, and $K<T$ is the parameter indicating the number of time steps to conduct backpropagation.
	
	\item[3.] Our models achieve competitive SNN performance with superior training efficiency on CIFAR-10, CIFAR-100, ImageNet, DVS-Gesture, and DVS-CIFAR10 under different network settings or large-scale network structures. On ImageNet, our method achieves  state-of-the-art accuracy while the memory cost and training time are reduced by more than 70\% and 50\%, respectively, compared with BPTT.
\end{itemize}
%-------------------------------------------------------------------------
\section{Related Work}
\label{sec:related}
\paragraph{The BPTT Framework for Training SNNs.}
A natural methodology for training SNNs is to adopt the gradient-descent-based BPTT framework, while assigning surrogate gradients (SG) to the non-differentiable spike generation functions to enable meaningful gradient calculation \cite{zenke2021remarkable,neftci2019surrogate,wu2018spatio,wu2019direct,shrestha2018slayer,huh2017gradient,ma2023exploiting}. Under the BPTT with SG framework, many effective techniques have been proposed to improve the performance, such as threshold-dependent batch normalization \cite{zheng2020going}, carefully designed surrogate functions \cite{li2021differentiable} or loss functions \cite{guo2022recdis,deng2022temporal}, SNN-specific network structures \cite{fang2021sew}, and trainable parameters of neuron models \cite{fang2021incorporating}. Many works conduct multi-stage training, typically including an ANN pre-training process, to reduce the latency (\ie, the number of time steps) for the energy efficiency issue, while maintaining competitive performance \cite{rathi2019enabling,rathi2020diet,chowdhurytowards,chowdhury2021one}. The BPTT with SG method has achieved high performance with low latency on both static \cite{fang2021sew,guo2022reducing} and neuromorphic \cite{li2022neuromorphic,deng2022temporal} datasets.
However, those approaches need to backpropagate error signals through both temporal and spatial domains, thus suffering from high computational costs during training \cite{deng2020rethinking}. In this work, we reduce the memory and time complexity of the BPTT with SG framework with gradient approximation and instantaneous gradient calculation, while maintaining the same level of performance.

\paragraph{Other SNN Training Methods.} 
The ANN-to-SNN conversion method \cite{yan2021near,deng2021optimal,sengupta2019going,rueckauer2017conversion,han2020rmp,han2020deep,ding2021optimal} has recently yielded top performance, especially on ImageNet \cite{li2021free,meng2022ann,bu2022optimal}. This method builds a connection between the firing rates of SNNs and some corresponding ANN outputs. With this connection, the parameters of an SNN are directly determined from the associated ANN. Despite the good performance, the required latency is much higher compared with the BPTT with SG method. This fact hurts the energy efficiency of SNN inference \cite{davidson2021comparison}. Furthermore, the conversion method is not suitable for neuromorphic data. Some gradient-based direct training methods find the equivalence between spike representations (\eg, firing rates or first spike times) of SNNs and some differentiable mappings or fixed-point equations \cite{mostafa2017supervised,zhou2019temporal,xiao2021ide,meng2022training,thiele2019spikegrad,wu2021training,wu2021tandem,xiao2022spide,yang2022training}.
Then the spike-representation-based methods train SNNs by gradients calculated from the corresponding mappings or fixed-point equations.
Such methods have recently achieved competitive performance, but still suffer relatively high latency, like the conversion-based methods. To achieve low latency, our work is mainly based on the BPTT with SG method and then focuses on the training cost issue of BPTT with SG.

\begin{figure}[t]
	\centering
	\includegraphics[width=0.78\columnwidth]{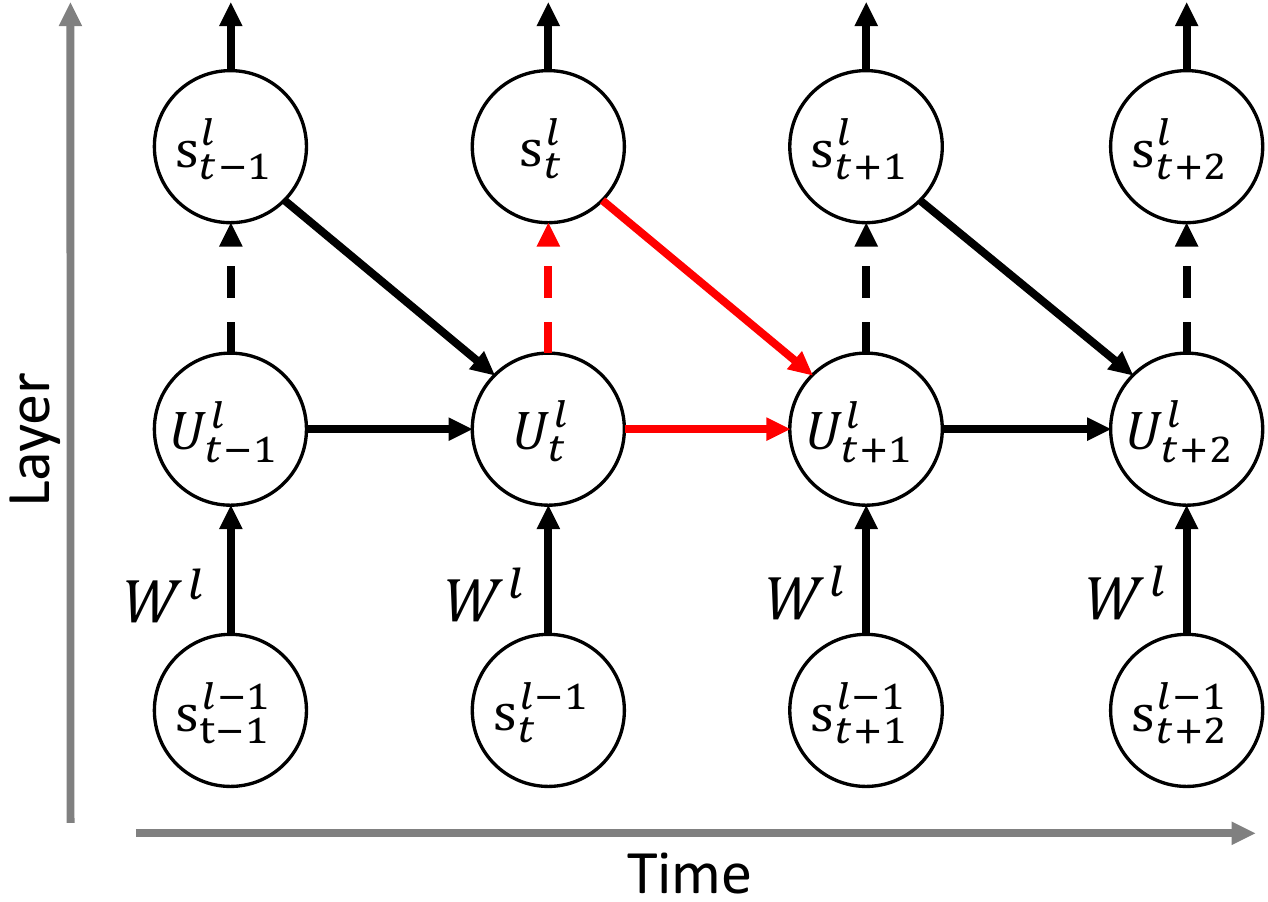} 
	\caption{\small Computational graph of multi-layer SNNs. Dashed arrows represent the non-differentiable spike generation functions.}
	\label{fig:bptt}	
\end{figure}

\paragraph{Efficient Training for SNNs.} 
Several RNN training methods pursue online learning and constant memory occupation agnostic time horizon, such as real time recurrent learning \cite{williams1989learning} and forward propagation through time \cite{kag2021training}. Inspired by them, some SNN training methods \cite{zenke2018superspike,zenke2021brain,bellec2020solution,bohnstingl2022online,yin2021accurate} apply similar ideas to achieve memory-efficient and online learning. However, such SNN methods cannot scale to large-scale tasks due to some limitations, such as using feedback alignment \cite{nokland2016direct}, simple network structures, and still large memory costs although constant with time.
\cite{kaiser2020synaptic} ignores temporal dependencies of information propagation to enable local training with no memory overhead for computing gradients.
They use similar ways as ours to approximate the gradient calculation, but do not verify the reasonableness of the approximation, and cannot achieve comparable accuracy as ours, even for simple tasks.
\cite{perez2021sparse} presents the sparse SNN backpropagation algorithm in which gradients only backpropagate through ``active neurons'', that account for a small number of the total, at each time step. However, \cite{perez2021sparse} does not consider large-scale tasks, and the memory grows linearly with the number of time steps. Recently, some methods \cite{yang2022training,xiao2022online} have achieved satisfactory performance on large-scale datasets with time steps-independent memory occupation. Still, they either rely on pre-trained ANNs and cannot conduct direct training\cite{yang2022training}, or do not consider reducing time complexity and require more memory than our work due to tracking presynaptic activities \cite{xiao2022online}.
Our work can achieve state-of-the-art (SOTA) performance while maintaining superior time and memory efficiency compared with other methods.

\section{Preliminaries}

\subsection{The Leaky Integrate and Fire Model}
A spiking neuron replicates the behavior of a biological neuron which integrates input spikes into its membrane potential $u(t)$ and transmits spikes when the potential $u$ reaches a threshold. Such spike transmission is controlled via some spiking neural models. In this paper, we consider a widely adopted neuron model, the leaky integrate and fire (LIF) model \cite{burkitt2006review}, to characterize the dynamics of $u(t)$:
\begin{equation} 
\tau \frac{\mathrm{d} u(t)}{\mathrm{d} t}=-(u(t)-u_{rest}) + R\cdot I(t), \ \text{when} \ u(t)<V_{th},
\end{equation}
where $\tau$ is the time constant, $R$ is the resistance, $u_{rest}$ is the resting potential, $V_{th}$ is the spike threshold, and $I$ is the input current which depends on received spikes. The current model is given as 
$
I(t)=\sum_{i}w_i^\prime s_i(t) + b^\prime,
$
where $w_i^\prime$ is the weight from neuron-$i$ to the target neuron, $b^\prime$ is a bias term, and $s_i(t)$ is the received train from neuron-$i$. $s_i(t)$ is formed as $s_i(t)=\sum_{f}\delta(t-t_{i,f})$, in which $\delta(\cdot)$ is the Dirac delta function and $t_{i,f}$ is the $f$-th fire time of neuron-$i$. Once $u\ge V_{th}$ at time $t_f$, the neuron output a spike, and the potential is reset to $u_{rest}$. The output spike train is described as $s_{out}(t)=\sum_{f}\delta(t-t_{f})$.

In application, the discrete computational form of the LIF model is adopted. With $u_{rest}=0$, the discrete LIF model can be described as
\begin{equation} \label{eqn:lif}
\left\{ 
\begin{aligned}
&u[t]=(1-\frac{1}{\tau})v[t-1] + \sum_{i}w_i s_i[t] + b, \\[-3pt]
&s_{out}[t]=H(u[t]-V_{th}),  \\
&v[t]=u[t]-V_{th} s_{out}[t],
\end{aligned}
\right. 
\end{equation}
where $t\in\{1,2,\cdots,T\}$ is the time step index, $H(\cdot)$ is the Heaviside step function, $s_{out}[t], s_{i}[t] \in \{0,1\}$, $v[t]$ is the intermediate value representing the membrane potential before being reset and $v[0]=0$, and $w_i$ and $b$ are reparameterized version of $w_i^\prime$ and $b^\prime$, respectively, where $\tau$ and $R$ are absorbed. The discrete step size is $1$, so $\tau>1$ is required.

\subsection{Backpropagation Through Time with Surrogate Gradient}
Consider the multi-layer feedforward SNNs with the LIF neurons based on \cref{eqn:lif}:
\begin{equation} \label{eqn:feedforward}
\mathbf{u}^{l}[t]=(1-\frac{1}{\tau})(\mathbf{u}^{l}[t-1] -V_{t h} \mathbf{s}^{l}[t-1])+ \mathbf{W}^{l} \mathbf{s}^{l-1}[t],
\end{equation}
where $l=1,2,\cdots,L$ is the layer index, $t=1,2,\cdots,T$,  $0<1-\frac{1}{\tau}<1$, $\mathbf{s}^0$ are the input data to the network, $\mathbf{s}^l$ are the output spike trains of the $l^{\text{th}}$ layer, $\mathbf{W}^{l}$ are the weight to be trained. We ignore the bias term for simplicity. The final output of the network is $\mathbf{o}[t]=\mathbf{W}^o\mathbf{s}^{L}[t]$, where $\mathbf{W}^o$ is the parameter of the classifier.
The classification is based on the average of the output at each time step $\frac{1}{T}\sum_{t=1}^{T} \mathbf{o}[t]$. 
The loss function $\mathcal{L}$ is defined on $\{\mathbf{o}[1],\cdots,\mathbf{o}[T]\}$, and is often defined as \cite{zheng2020going,rathi2020diet,xiao2021ide,li2021differentiable}
\begin{equation} \label{eqn:loss_rate}
\mathcal{L} = \ell(\frac{1}{T}\sum_{t=1}^{T}\mathbf{o}[t],y),
\end{equation}
where $y$ is the label, and $\ell$ can be the cross-entropy function.

\begin{figure*}[h]
	%\centering
	\begin{subfigure}{0.33\linewidth}
		\includegraphics[width=0.99\columnwidth]{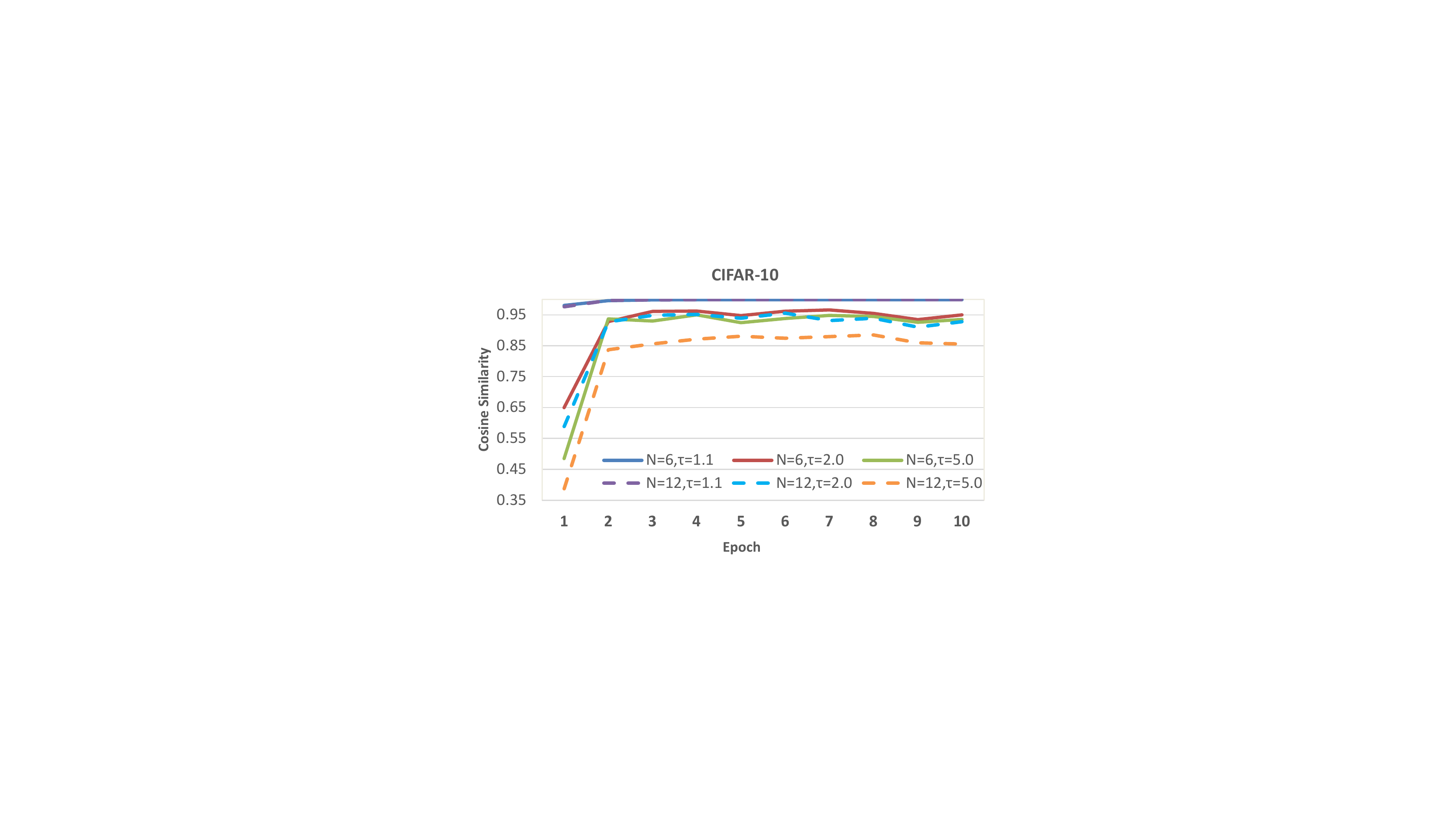} 
	\end{subfigure}
	\begin{subfigure}{0.33\linewidth}
		\includegraphics[width=0.99\columnwidth]{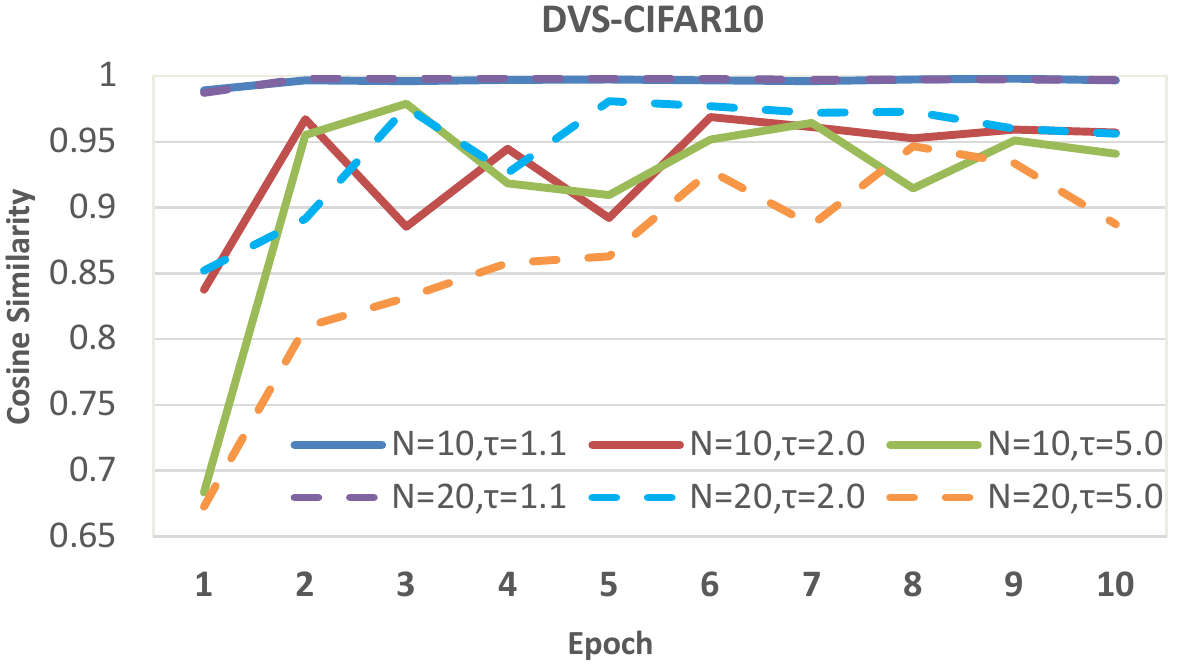} 
	\end{subfigure}
	\begin{subfigure}{0.33\linewidth}
		\includegraphics[width=0.99\columnwidth]{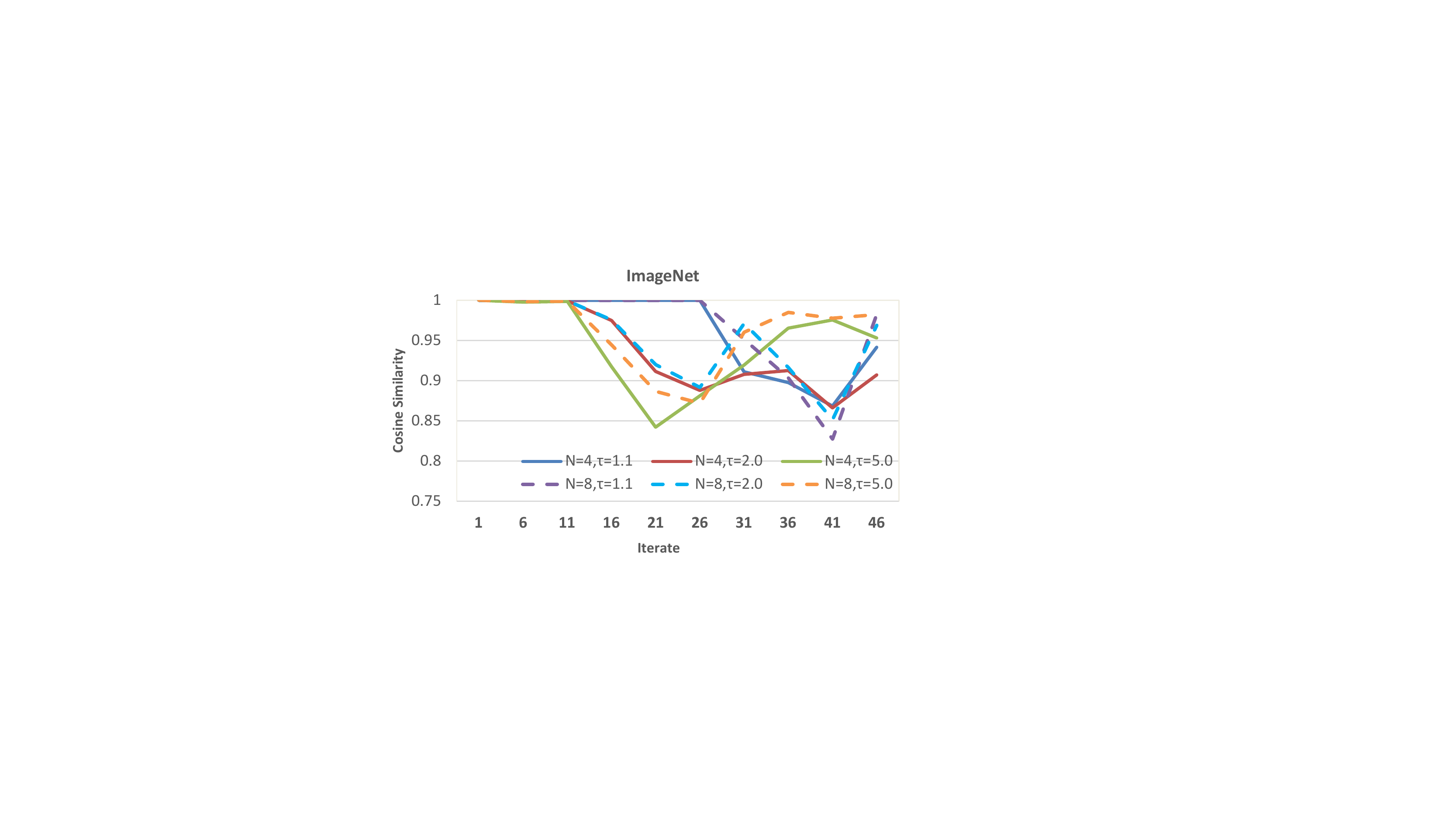}
	\end{subfigure}
	\caption{\small The cosine similarity between the gradients calculated by BPTT and the ``spatial gradients''. For the CIFAR-10, DVS-CIFAR10, and ImageNet datasets, the network architectures of ResNet-18, VGG-11, and ResNet-34 are adopted, respectively. Other settings and hyperparameters for the experiments are described in the Supplementary Materials. We calculate the
		cosine similarity for different layers and report the average in the figure. For ImageNet, we only train the network for 50 iterates since the training is time-consuming. Dashed curves represent a larger number of time steps.}
	\label{fig:gradient_compare}
\end{figure*}

BPTT with SG calculates gradients according to the computational graph of \cref{eqn:feedforward} shown in \cref{fig:bptt}. The pseudocode is described in the Supplementary Materials. For each neuron $i$ in the $l$-th layer, the derivative $\frac{\partial \mathbf{s}_i^{l}[t] }{\partial \mathbf{u}_i^{l}[t]}$ is zero for all values of $\mathbf{u}_i^{l}[t]$ except when $\mathbf{u}_i^{l}[t]=V_{th}$, where the derivative is infinity. Such a non-differentiability problem is solved by approximating $\frac{\partial \mathbf{s}_i^{l}[t] }{\partial \mathbf{u}_i^{l}[t]}$ with some well-behaved surrogate function, such as the rectangle function \cite{wu2018spatio,wu2019direct}
\begin{equation} \label{eqn:rectangle_sg}
\frac{\partial s }{\partial u}=\frac{1}{\gamma} \mathbbm{1}\left(\left|u-V_{th}\right|<\frac{\gamma}{2}\right), 
\end{equation}
and the triangle function \cite{deng2022temporal,EsserMACAABMMBN16}
\begin{equation} \label{eqn:triangle_sg}
\frac{\partial s}{\partial u}=\frac{1}{\gamma^2} \max \left(0, \gamma-\left|u-V_{t h}\right|\right),
\end{equation}
where $\mathbbm{1}(\cdot)$ is the indicator function, and the hyperparameter $\gamma$ for both functions is often set as $V_{th}$.

\section{The proposed Spatial Learning Through Time Method}
\label{sec:SLTT}
\subsection{Observation from the BPTT with SG Method}
\label{sec:observation} 
In this subsection, we decompose the derivatives for membrane potential, as calculated in the BPTT method, into spatial components and temporal components. 
Based on the decomposition, we observe that the spatial components dominate the calculated derivatives. This phenomenon inspires the proposed method, as introduced in \cref{sec:method}.

According to \cref{eqn:feedforward} and \cref{fig:bptt}, the gradients for weights in an SNN with $T$ time steps are calculated by 
\begin{equation} \label{eqn:w-update}
\nabla_{\mathbf{W}^{l}}\mathcal{L}
%*
=\sum_{t=1}^{T}
%*
\frac{\partial \mathcal{L}}{\partial \mathbf{u}^{l}[t]} ^\top
%*
\mathbf{s}^{l-1}[t]^\top, \ l = L, L-1,\cdots,1.
\end{equation}
We further define 
\begin{equation}
\textcolor{black}{\mathbf{\epsilon}^{l}[t]} \triangleq
\frac{\partial \mathbf{u}^{l}[t+1]}{\partial \mathbf{u}^{l}[t]}
%*
+\frac{\partial \mathbf{u}^{l}[t+1]}{\partial \mathbf{s}^{l}[t]}
%*
\frac{\partial \mathbf{s}^{l}[t]}{\partial \mathbf{u}^{l}[t]}
\end{equation}
as the sensitivity of $\mathbf{u}^{l}[t+1]$ with respect to $\mathbf{u}^{l}[t]$, represented by the red arrows shown in \cref{fig:bptt}. Then with the chain rule, $\frac{\partial \mathcal{L}}{\partial \mathbf{u}^{l}[t]}$ in \cref{eqn:w-update} can be further calculated recursively. In particular, for the output layer, we arrive at
\begin{equation} \small \label{eqn:u-update-final}
\frac{\partial \mathcal{L}}{\partial \mathbf{u}^{L}[t]}
%*
=\textcolor{newblue}{
	\frac{\partial \mathcal{L}}{\partial \mathbf{s}^{L}[t]}
	%*
	\frac{\partial \mathbf{s}^{L}[t]}{\partial \mathbf{u}^{L}[t]}
}
%*
+ \textcolor{newgreen}{
	\sum_{t^\prime=t+1}^{T}
	%*
	\frac{\partial \mathcal{L}}{\partial \mathbf{s}^{L}[t^\prime]}
	%*
	\frac{\partial \mathbf{s}^{L}[t^\prime]}{\partial \mathbf{u}^{L}[t^\prime]}
	%*
	\prod_{t^{\dprime}=1}^{t^\prime - t}
	%*
	\mathbf{\epsilon}^{L}[t^\prime-t^\dprime]
},
\end{equation}
and for the intermediate layer $l=L-1,\cdots,1$, we have
\begin{equation} \small \label{eqn:u-update-inter}
\begin{aligned}
\frac{\partial \mathcal{L}}{\partial \mathbf{u}^{l}[t]}
%*
=&\textcolor{newblue}{
	\frac{\partial \mathcal{L}}{\partial \mathbf{u}^{l+1}[t]}
	%*
	\frac{\partial \mathbf{u}^{l+1}[t]}{\partial \mathbf{s}^{l}[t]}
	%*
	\frac{\partial \mathbf{s}^{l}[t]}{\partial \mathbf{u}^{l}[t]}}
\\
&+ \textcolor{newgreen}{
	\sum_{t^\prime=t+1}^{T}
	%*
	\frac{\partial \mathcal{L}}{\partial \mathbf{u}^{l+1}[t^\prime]}
	%*
	\frac{\partial \mathbf{u}^{l+1}[t^\prime]}{\partial \mathbf{s}^{l}[t^\prime]}
	%*
	\frac{\partial \mathbf{s}^{l}[t^\prime]}{\partial \mathbf{u}^{l}[t^\prime]}
	%*
	\prod_{t^{\dprime}=1}^{t^\prime - t}
	%*
	\mathbf{\epsilon}^{l}[t^\prime-t^\dprime]
}.
\end{aligned}
\end{equation}
The detailed derivation can be found in the Supplementary Materials. In both \cref{eqn:u-update-final,eqn:u-update-inter}, the terms before the addition symbols on the R.H.S. (the blue terms) can be treated as the spatial components, and the remaining parts (the green terms) represent the temporal components. 

We observe that the temporal components contribute a little to $\frac{\partial \mathcal{L}}{\partial \mathbf{u}^{l}[t]}$, since the diagonal matrix $\prod_{t^{\dprime}=1}^{t^\prime - t} \textcolor{black}{\mathbf{\epsilon}^{l}[t^\prime-t^\dprime]}$ is supposed to have a small spectral norm for typical settings of surrogate functions.
To see this, we consider the rectangle surrogate (\cref{eqn:rectangle_sg}) with $\gamma=V_{th}$ as an example. Based on \cref{eqn:feedforward}, the diagonal elements of $\textcolor{black}{\mathbf{\epsilon}^{l}[t]}$ are
\begin{equation} \small \label{eqn:small_sensitivity}
\textcolor{black}{\left(\mathbf{\epsilon}^{l}[t]\right)_{jj}} = \left\{\begin{array}{l}0, \quad \frac{1}{2}V_{th}<\left(\mathbf{u}^{l}[t]\right)_j<\frac{3}{2}V_{th}, \\ 1-\frac{1}{\tau}, \quad \text{otherwise}.\end{array}\right.
\end{equation}
Define $\lambda \triangleq 1-\frac{1}{\tau}$, then $\textcolor{black}{\left(\mathbf{\epsilon}^{l}[t]\right)_{jj}}$ is zero in an easily-reached interval, and is at least not large for commonly used small $\lambda$ (\eg, $\lambda=0.5$ \cite{xiao2022online,deng2022temporal}, $\lambda=0.25$ \cite{zheng2020going}, and $\lambda=0.2$ \cite{guo2022recdis}). The diagonal values of the matrix $\prod_{t^{\dprime}=1}^{t^\prime - t} \textcolor{black}{\mathbf{\epsilon}^{l}[t^\prime-t^\dprime]}$ are smaller than the single term $\textcolor{black}{\mathbf{\epsilon}^{l}[t^\prime-t^\dprime]}$ due to the product operations, especially when $t^\prime-t$ is large. The temporal components are further unimportant if the spatial and temporal components have similar directions. Then the spatial components in \cref{eqn:u-update-final,eqn:u-update-inter} dominate the gradients.

For other widely-used surrogate functions and their corresponding hyperparameters, the phenomenon of dominant spatial components still exists since the surrogate functions have similar shapes and behavior. In order to illustrate this, we conduct experiments on CIFAR-10, DVS-CIFAR10, and ImageNet using the triangle surrogate (\cref{eqn:triangle_sg}) with $\gamma=V_{th}$. We use the BPTT with SG method to train the SNNs on the abovementioned three datasets, and call the calculated gradients the baseline gradients. During training, we also calculate the gradients for weights when the temporal components are abandoned, and call such gradients the spatial gradients. We compare the disparity between baseline and spatial gradients by calculating their cosine similarity. The results are demonstrated in \cref{fig:gradient_compare}. The similarity maintains a high level for different datasets, the number of time steps, and $\tau$. In particular, for $\tau=1.1 \ (\lambda=1-\frac{1}{\tau}\approx0.09)$, the baseline and spatial
gradients consistently have a remarkably similar direction on CIFAR-10 and DVS-CIFAR10. In conclusion, the spatial components play a dominant role in the gradient backpropagation process.

\subsection{Spatial Learning Through Time}
\label{sec:method}

Based on the observation introduced in \cref{sec:observation}, we propose to ignore the temporal components in \cref{eqn:u-update-final,eqn:u-update-inter} to achieve more efficient backpropagation. In detail, the gradients for weights are calculated by
\begin{equation} \label{eqn:w-update-ours}
\nabla_{\mathbf{W}^{l}}\mathcal{L}
%*
=\sum_{t=1}^{T} \mathbf{e}_\mathbf{W}^{l}[t], \quad
%*
\mathbf{e}_\mathbf{W}^{l}[t]  = \mathbf{e}_\mathbf{u}^{l}[t] ^\top
%*
\mathbf{s}^{l-1}[t]^\top,
\end{equation}
where 
\begin{equation} \label{eqn:u-update-ours}
\mathbf{e}_\mathbf{u}^{l}[t] = \left\{\begin{array}{l}
%*
\frac{\partial \mathcal{L}}{\partial \mathbf{s}^{L}[t]}
%*
\frac{\partial \mathbf{s}^{L}[t]}{\partial \mathbf{u}^{L}[t]}, \quad\quad\quad\quad\quad \ l=L, \\ 
\mathbf{e}_\mathbf{u}^{l+1}[t]
%*
\frac{\partial \mathbf{u}^{l+1}[t]}{\partial \mathbf{s}^{l}[t]}
%*
\frac{\partial \mathbf{s}^{l}[t]}{\partial \mathbf{u}^{l}[t]}, 
\quad \quad l<L,\end{array}\right.
\end{equation}
and $\mathbf{e}_\mathbf{u}^{l}[t]$ is a row vector.
Compared with \cref{eqn:w-update,eqn:u-update-final,eqn:u-update-inter}, the required number of scalar multiplications in \cref{eqn:w-update-ours,eqn:u-update-ours} is reduced from $\Omega(T^2)$ to $\Omega(T)$. 
Note that the BPTT method does not conduct naive
computation of the sum-product as shown in \cref{eqn:u-update-final,eqn:u-update-inter}, but in a recursive way to achieve $\Omega(T)$ computational complexity, as shown in the Supplementary Materials. Although BPTT and the proposed update rule both need $\Omega(T)$ scalar multiplications, such multiplication operations are reduced due to ignoring some routes in the computational graph. Please refer to Supplementary Materials for time complexity analysis.
Therefore, the time complexity of the proposed update rule is much lower than that of BPTT with SG, although they are both proportional to $T$.

According to \cref{eqn:w-update-ours,eqn:u-update-ours}, the error signals $\mathbf{e}_\mathbf{W}^{l}$ and $\mathbf{e}_\mathbf{u}^{l}$ at each time step can be calculated independently without information from other time steps. Thus, if $\frac{\partial \mathcal{L}}{\partial \mathbf{s}^{L}[t]}$ can be calculated instantaneously at time step $t$, $\mathbf{e}_\mathbf{W}^{l}[t]$ and $\mathbf{e}_\mathbf{u}^{l}[t]$ can also be calculated instantaneously at time step $t$. Then there is no need to store intermediate states of the whole time horizon. To achieve the instantaneous calculation of $\frac{\partial \mathcal{L}}{\partial \mathbf{s}^{L}[t]}$,  we adopt the loss function 
\cite{guo2022recdis,deng2022temporal,xiao2022online}
\begin{equation} \label{eqn:loss}
\mathcal{L} = \frac{1}{T}\sum_{t=1}^{T}\ell(\mathbf{o}[t],y),
\end{equation}
which is an upper bound of the loss introduced in \cref{eqn:loss_rate}.

\vspace{5pt}

\setlength{\textfloatsep}{14pt}
\begin{algorithm}[h] 
	\caption{One iteration of SNN training with the SLTT or SLTT-K methods.}
	\label{alg:SLTT}
	\begin{algorithmic}[1] 
		\Require Time steps $T$; Network depth $L$; Network parameters $\{\mathbf{W}^l\}_{l=1}^L$; Training data $(\mathbf{s}^0,\mathbf{y})$; Learning rate $\eta$; Required backpropagation times $K$ (for SLTT-K).
		\item[\textbf{Initialize:}] $\Delta \mathbf{W}^l = 0, \ l=1,2,\cdots,L$.
		\If {using SLTT-K}
		\State Sample $K$ numbers in $[1,2,\cdots,T]$ w/o replacement to form $required\_bp\_steps$;
		\Else
		\State $required\_bp\_steps=[1,2,\cdots,T]$;
		\EndIf
		
		\For {$t=1,2,\cdots,T$}
		\State Calculate $\mathbf{s}^L[t]$ by \cref{eqn:feedforward,eqn:lif}; \quad //\textbf{Forward}
		\State Calculate the instantaneous loss $\ell$ in \cref{eqn:loss};
		\If {$t$ in $required\_bp\_steps$} \quad\quad //\textbf{Backward}
		\State $\mathbf{e}_\mathbf{u}^{L}[t] =\frac{1}{T}\frac{\partial \ell}{\partial \mathbf{s}^{L}[t]}\frac{\partial \mathbf{s}^{L}[t]}{\partial \mathbf{u}^{L}[t]}$;
		\For {$l=L-1,\cdots,1$}
		\State $\mathbf{e}_\mathbf{u}^l[t] = \mathbf{e}_\mathbf{u}^{l+1}[t]
		\frac{\partial \mathbf{u}^{l+1}[t]}{\partial \mathbf{s}^{l}[t]}
		\frac{\partial \mathbf{s}^{l}[t]}{\partial \mathbf{u}^{l}[t]}$;
		\State $\Delta \mathbf{W}^l \mathrel{+}= \mathbf{e}_\mathbf{u}^{l}[t] ^\top \mathbf{s}^{l-1}[t]^\top$;
		\EndFor
		\EndIf
		\EndFor
		
		\State $\mathbf{W}^l = \mathbf{W}^l - \eta \Delta \mathbf{W}^l, \ l=1,2,\cdots,L$;
		
		\Ensure Trained network parameters $\{\mathbf{W}^l\}_{l=1}^L$.
	\end{algorithmic}
\end{algorithm} 

We propose the Spatial Learning Through Time (SLTT) method using gradient approximation and instantaneous gradient calculation, as detailed in \cref{alg:SLTT}. In \cref{alg:SLTT}, all the intermediate terms at time step $t$, such as $\mathbf{e}_\mathbf{u}^{l}[t], \mathbf{s}^{l}[t],\frac{\partial \mathbf{u}^{l+1}[t]}{\partial \mathbf{s}^{l}[t]}$, and $\frac{\partial \mathbf{s}^{l}[t]}{\partial \mathbf{u}^{l}[t]}$, are never used in other time steps, so the required memory overhead of SLTT is constant agnostic to the total number of time steps $T$. On the contrary, the BPTT with SG method has an $\Omega(T)$ memory cost associated with storing all intermediate states for all time steps. In summary, the proposed method is both time-efficient and memory-efficient, and has the potential to enable online learning for neuromorphic substrates \cite{zenke2021brain}.

\subsection{Further Reducing Time Complexity}
\label{sec:rethink}

Due to the online update rule of the proposed method, the gradients for weights are calculated according to an ensemble of $T$ independent computational graphs, and the time complexity of gradient calculation is $\Omega(T)$. 
The $T$ computational graphs can have similar behavior, and then similar gradient directions can be obtained with only a portion of the computational graphs.
Based on this, we propose to train a portion of time steps to reduce the time complexity further. In detail, for each iteration in the training process, we randomly choose $K$ time indexes from the time horizon, and only conduct backpropagation with SLTT at the chosen $K$ time steps. We call such a method the SLTT-K method, and the pseudo-code is given in \cref{alg:SLTT}. Note that setting $K=T$ results in the original SLTT method. Compared with SLTT, the time complexity of SLTT-K is reduced to $\Omega(K)$, and the memory complexity is the same. In our experiments, SLTT-K can achieve satisfactory performance even when $K=1$ or $2$, as shown in \cref{sec:experiments}, indicating superior efficiency of the SLTT-K method.

\section{Experiments}
\label{sec:experiments}
In this section, we evaluate the proposed method on CIFAR-10 \cite{krizhevsky2009learning}, CIFAR-100\cite{krizhevsky2009learning}, ImageNet\cite{deng2009imagenet}, DVS-Gesture\cite{amir2017low}, and DVS-CIFAR10 \cite{li2017cifar10} to demonstrate its superior performance regarding training costs and accuracy. For our SNN models, we set $V_{th}=1$ and $\tau=1.1$, and apply the triangle surrogate function (\cref{eqn:triangle_sg}).
An effective technique, batch normalization (BN) along the temporal dimension \cite{zheng2020going}, cannot be adopted to our method, since it requires calculation along the total time steps and then intrinsically prevents time-steps-independent memory costs. Therefore, for some tasks, we borrow the idea from normalization-free ResNets (NF-ResNets) \cite{brock2021high} to replace BN by weight standardization (WS) \cite{qiao2019micro}.
Please refer to the Supplementary Materials for experimental details.

\begin{table}[t] 
	\caption{\small Comparison of training memory cost, training time, and accuracy between SLTT and BPTT. The ``Memory'' column indicates the maximum memory usage on an GPU during training. And the ``Time'' column indicates the wall-clock training time.}
	\label{table:compare}
	\begin{threeparttable}
		\begin{tabular}{lcccc}
			\toprule  
			Dataset  & Method & Memory & Time & Acc \\
			\midrule 
			\multirow{2}*{CIFAR-10}&  BPTT & 3.00G & 6.35h & \bf{94.60\%}\\
			&  SLTT & \bf{1.09G} & \bf{4.58h} & 94.59\%  \\
			\hline
			\multirow{2}*{CIFAR-100}  &  BPTT & 3.00G & 6.39h & 73.80\% \\
			&  SLTT & \bf{1.12}G & \bf{4.68h} & \bf{74.67}\% \\
			\hline
			\multirow{2}*{ImageNet}  &  BPTT & 28.41G & 73.8h & \bf{66.47\%} \\
			&  SLTT & \bf{8.47G} & \bf{66.9h} & 66.19\% \\
			\hline
			\multirow{2}*{DVS-Gesture} &  BPTT & 5.82G & 2.68h & 97.22\% \\
			& SLTT & \bf{1.07G} & \bf{2.64h} & \bf{97.92\%} \\
			\hline
			\multirow{2}*{DVS-CIFAR10} &  BPTT & 3.70G & 4.47h & 73.60\% \\
			&  SLTT & \bf{1.07G} & \bf{3.43h}  & \bf{77.30\%} \\
			\bottomrule
		\end{tabular}
	\end{threeparttable}
\end{table}

\subsection{Comparison with BPTT}
\label{sec:bptt_compare}

The major advantage of SLTT over BPTT is the low memory and time complexity. To verify the advantage of SLTT, we use both methods with the same experimental setup to train SNNs. For CIFAR-10, CIFAR-100, ImageNet, DVS-Gesture, and DVS-CIFAR10, the network architectures we adopt are ResNet-18, ResNet-18, NF-ResNet-34, VGG-11, and VGG-11, respectively, and the total number of time steps are 6, 6, 6, 20, and 10, respectively. For ImageNet, to accelerate training, we first train the SNN with only 1 time step for 100 epochs to get a pre-trained model, and then use SLTT or BPTT to fine-tune the model with 6 time steps for 30 epochs. Details of the training settings can be found in the Supplementary Materials. We run all the experiments on the same Tesla-V100 GPU, and ensure that the GPU card is running only one experiment at a time to perform a fair comparison. It is not easy to directly compare the running time for two training methods since the running time is code-dependent and platform-dependent. In our experiments, we measure the wall-clock time of the total training process, including forward propagation and evaluation on the validation set after each epoch, to give a rough comparison. For ImageNet, the training time only includes the 30-epoch fine-tuning part.

\begin{table}[h] 
	\caption{\small Comparison of training time and accuracy between SLTT and SLTT-K. ``NFRN'' means Normalizer-Free ResNet. For DVS-Gesture and DVS-CIFAR10, the ``Acc'' column reports the average accuracy of 3 runs of experiments using different random seeds. We skip the standard deviation values since they are almost 0, except for SLTT on DVS-CIFAR10 where the value is 0.23\%.}
	\label{table:SLTT-k}
	\begin{threeparttable}
		\begin{tabular}{lcccc}
			\toprule  
			Network & Method  & Memory & Time & Acc \\
			\midrule 
			\multicolumn{5}{c}{{DVS-Gesture, $T=20$}}\\
			\midrule 
			\multirow{2}*{VGG-11}  &  SLTT & \multirow{2}*{$\approx$1.1G} & 2.64h
			&  \bf{97.92\%} \\
			&  SLTT-4 &  & \bf{1.69h} & 97.45\% \\
			\midrule
			\multicolumn{5}{c}{{DVS-CIFAR10, $T=10$}}\\
			\midrule 
			\multirow{2}*{VGG-11}  &  SLTT & \multirow{2}*{$\approx$1.1G} & 3.43h
			& \bf{77.16\%} \\
			&  SLTT-2 &  & \bf{2.49h} & 76.70\% \\
			\midrule
			\multicolumn{5}{c}{{ImageNet, $T=6$}}\\
			\midrule
			\multirow{3}*{NFRN-34}  &  SLTT & \multirow{3}*{$\approx$8.5G} & 66.90h
			& \bf{66.19\%} \\
			&  SLTT-2 &  & 41.88h & 66.09\% \\
			&  SLTT-1 &  & \bf{32.03h} & 66.17\% \\
			\hline
			\multirow{3}*{NFRN-50} & SLTT & \multirow{3}*{$\approx$24.5G} & 126.05h & \bf{67.02\%} \\
			&  SLTT-2 &  & 80.63h & 66.98\% \\
			&  SLTT-1 &  & \bf{69.36h} & 66.94\% \\
			\hline
			\multirow{3}*{NFRN-101} &  SLTT & \multirow{3}*{$\approx$33.8G} & 248.23h & 69.14\% \\
			&  SLTT-2 &  & 123.05h & \bf{69.26\%} \\
			&  SLTT-1 &  & \bf{91.73h} & 69.14\% \\ 
			\bottomrule
		\end{tabular}
	\end{threeparttable}
	%\vspace{-2pt}
\end{table}

The results of maximum memory usage, total wall-clock training time, and accuracy for both SLTT and BPTT on different datasets are listed in \cref{table:compare}. SLTT enjoys similar accuracy compared with BPTT while using less memory and time. For all the datasets, SLTT requires less than one-third of the GPU memory of BPTT. In fact, SLTT maintains constant memory cost over the different number of time steps $T$, while the training memory of BPTT grows linearly in $T$. The memory occupied by SLTT for $T$ time steps is always similar to that of BPTT for $1$ time step. Regarding training time, SLTT also enjoys faster training on both algorithmic and practical aspects. For DVS-Gesture, the training time for both methods are almost the same, deviating from the algorithmic time complexity. That may be due to really little training time for both methods and the good parallel computing performance of the GPU.

\subsection{Performance of SLTT-K}

As introduced in \cref{sec:rethink}, the proposed SLTT method has a variant, SLTT-K, that conducts backpropagation only in randomly selected $K$ time steps for reducing training time. We verify the effectiveness of SLTT-K on the neuromorphic datasets, DVS-Gesture and DVS-CIFAR10, and the large-scale static dataset, ImageNet. For the ImageNet dataset, we first pre-train the 1-time-step networks, and then fine-tune them with 6 time steps, as described in \cref{sec:bptt_compare}.
We train the NF-ResNet-101 networks on a single Tesla-A100 GPU, while we use a single Tesla-V100 GPU for other experiments.
As shown in \cref{table:SLTT-k}, the SLTT-K method yields competitive accuracy with SLTT (also BPTT) for different datasets and network architectures, even when $K=\frac{1}{6}T$ or $\frac{1}{5}T$.
With such small values of $K$, further compared with BPTT, the SLTT-K method enjoys comparable or even better training results, less memory cost (much less if $T$ is large), and much faster training speed.

\begin{table}[t] 
	\caption{\small Comparison of training memory cost and training time per epoch between SLTT and OTTT.}
	\label{table:compare_ottt}
	\centering
	\begin{threeparttable}
		\begin{tabular}{cccc}
			\toprule  
			Dataset  & Method & Memory & Time/Epoch  \\
			\midrule 
			\multirow{2}*{CIFAR-10}&  OTTT & 1.71G & 184.68s \\
			&  SLTT & \bf{1.00G} & \bf{54.48s}  \\ 
			\hline
			\multirow{2}*{CIFAR-100}  & OTTT & 1.71G & 177.72s \\
			&  SLTT & \bf{1.00}G & \bf{54.60s} \\  %31152qy
			\hline
			\multirow{2}*{ImageNet}  &  OTTT & 19.38G &  7.52h  \\
			&  SLTT & \bf{8.47G} & \bf{2.23h}  \\
			\hline
			\multirow{2}*{DVS-Gesture} &  OTTT & 3.38G & 236.64s  \\
			& SLTT & \bf{2.08G} & \bf{67.20s} \\
			\hline
			\multirow{2}*{DVS-CIFAR10} &  OTTT & 4.32G & 114.84s  \\
			&  SLTT & \bf{1.90G} & \bf{48.00s}  \\
			\bottomrule
		\end{tabular}
	\end{threeparttable}
\end{table}

\subsection{Comparison with Other Efficient Training Methods}
\label{sec:ottt_compare}

There are other online learning methods for SNNs \cite{xiao2022online,bellec2020solution,bohnstingl2022online,yin2021accurate,yang2022training} that achieve time-steps-independent memory costs. Among them, OTTT \cite{xiao2022online} enables direct training on large-scale datasets with relatively low training costs. In this subsection, we compare SLTT and OTTT under the same experimental settings of network structures and total time steps (see Supplementary Materials for details). The wall-clock training time and memory cost are calculated based on 3 epochs of training.
The two methods are comparable since the implementation of them are both based on PyTorch \cite{paszke2019pytorch} and SpikingJelly \cite{SpikingJelly}. The results are shown in \cref{table:compare_ottt}. SLTT outperforms OTTT on all the datasets regarding memory costs and training time, indicating the superior efficiency of SLTT. As for accuracy, SLTT also achieves better results than OTTT, as shown in \cref{table:sota}.

\subsection{Comparison with the State-of-the-Art}

\begin{table*}[h]
	\caption{\small Comparisons with other SNN training methods on CIFAR-10, CIFAR-100, ImageNet, DVS-Gesture, and DVS-CIFAR10. Results of our method on all the datasets, except ImageNet, are based on 3 runs of experiments. The ``Efficient Training'' column means whether the method requires less training time or memory occupation than the vanilla BPTT method for one epoch of training.}
	\label{table:sota}
	\centering
	\begin{threeparttable}
		\begin{tabular}{c|lcccc}
			\toprule[1.08pt] & Method & Network & Time Steps & Efficient Training  & Mean$\pm$Std (Best) \\
			\midrule[1.08pt]
			
			\multirow{5}*{\rotatebox{90}{CIFAR-10}} 
			&LTL-Online\cite{yang2022training} \tnote{1} & ResNet-20 & 16 & \Checkmark  & ${93.15\%}$ \\
			& OTTT\cite{xiao2022online} & VGG-11 (WS) & 6 & \Checkmark  & $93.52\pm0.06\%$ ($93.58\%$)  \\
			&Dspike\cite{li2021differentiable} & ResNet-18  & 6 & \XSolidBrush & $94.25\pm0.07\%$  \\
			& TET\cite{deng2022temporal} & ResNet-19 & 6  & \XSolidBrush  & $\mathbf{94.50\pm0.07\%}$  \\
			\cline{2-6}
			&SLTT (ours) & ResNet-18 & 6 & \Checkmark & $\underline{94.44\%\pm0.21\%}$ ($\mathbf{94.59\%}$) \\
			\midrule[1.08pt]
			
			\multirow{5}*{\rotatebox{90}{CIFAR-100}} 
			&OTTT\cite{xiao2022online} & VGG-11 (WS)  & 6 & \Checkmark & $71.05\pm0.04\%$ ($71.11\%$)  \\
			&ANN-to-SNN\cite{bu2022optimal} \tnote{1} & VGG-16 & 8 & \Checkmark  & ${73.96}\%$ \\
			&RecDis\cite{guo2022recdis} & ResNet-19 & 4 & \XSolidBrush  & $74.10\pm0.13 \%$ \\
			&TET\cite{deng2022temporal} & ResNet-19 & 6 & \XSolidBrush  & $\mathbf{74.72\pm0.28\%}$  \\
			\cline{2-6}
			&SLTT (ours) & ResNet-18 & 6 & \Checkmark  & $\underline{74.38\%\pm0.30\% \ (74.67\%)}$  \\
			\midrule[1.08pt]
			
			\multirow{6}*{\rotatebox{90}{ImageNet}} 	
			&ANN-to-SNN\cite{li2021free} \tnote{1} & ResNet-34  & 32 & \Checkmark & ${64.54\%}$ \\
			&TET\cite{deng2022temporal} & ResNet-34 & 6 & \XSolidBrush  & $64.79\%$ \\
			&OTTT\cite{xiao2022online} & NF-ResNet-34 & 6 & \Checkmark  & ${65.15\%}$ \\
			&SEW \cite{fang2021sew} & Sew ResNet-34,50,101  & 4 & \XSolidBrush  & $67.04\% ,67.78\%,68.76\%$ \\
			\cline{2-6}
			&SLTT (ours) & NF-ResNet-34,50 & 6 & \Checkmark  & ${66.19\%,67.02\%}$  \\
			&SLTT-2 (ours) & NF-ResNet-101 & 6 & \Checkmark  & $\mathbf{69.26\%}$  \\
			\midrule[1.08pt]
			
			\multirow{6}*{\rotatebox{90}{\small DVS-Gesture}} 
			&STBP-tdBN \cite{zheng2020going} & ResNet-17 & 40 & \XSolidBrush  &  $96.87\%$ \\
			&OTTT \cite{xiao2022online} & VGG-11 (WS) & 20 & \Checkmark  &  $96.88\%$  \\
			&PLIF \cite{fang2021incorporating} & VGG-like  & 20 & \XSolidBrush & $97.57\%$  \\
			& SEW\cite{fang2021sew} & Sew ResNet & 16  & \XSolidBrush  & ${97.92}\%$  \\
			\cline{2-6}
			&\multirow{2}*{SLTT (ours)} & VGG-11 & 20 & \Checkmark  & $\underline{97.92\pm0.00\% \ (97.92\%)}$ \\
			& & VGG-11 (WS) & 20 & \Checkmark  & $\mathbf{98.50\pm0.21\% \ (98.62\%)}$ \\
			\midrule[1.08pt]
			
			\multirow{6}*{\rotatebox{90}{DVS-CIFAR10}} 
			&Dspike\cite{li2021differentiable} & ResNet-18 & 10 & \XSolidBrush  & $75.40\pm0.05\%$  \\
			& InfLoR\cite{guo2022reducing} & ResNet-19 & 10 & \XSolidBrush   & $75.50\pm0.12\%$  \\	
			&OTTT \cite{xiao2022online} & VGG-11 (WS) & 10 & \Checkmark  &  $76.27\pm0.05\% (76.30\%)$  \\
			& TET\cite{deng2022temporal} & VGG-11 & 10 & \XSolidBrush   & $\mathbf{83.17\pm0.15\%}$  \\
			\cline{2-6}
			&SLTT (ours) \tnote{2} & VGG-11  & 10 & \Checkmark & ${77.17\pm0.23\% \ (77.30\%)}$ \\
			&SLTT (ours)  & VGG-11  & 10 & \Checkmark & $\underline{82.20\pm0.95\% \ (83.10\%)}$ \\
			\bottomrule[1.08pt]
		\end{tabular}
		\small
		$^{1}$ Pre-trained ANN models are required. \ $^{2}$ Without data augmentation.
	\end{threeparttable}
\end{table*}

The proposed SLTT method is not designed to achieve the best accuracy, but to enable more efficient training. Still, our method achieves competitive results compared with the SOTA methods, as shown in \cref{table:sota}. Besides, our method obtains such good performance with only a few time steps, leading to low energy consumption when the trained networks are implemented on neuromorphic hardware.

\begin{figure}[t]
	\centering
	\begin{subfigure}{0.49\linewidth}
		%\hspace{-1.9em}
		\includegraphics[width=1.12\columnwidth]{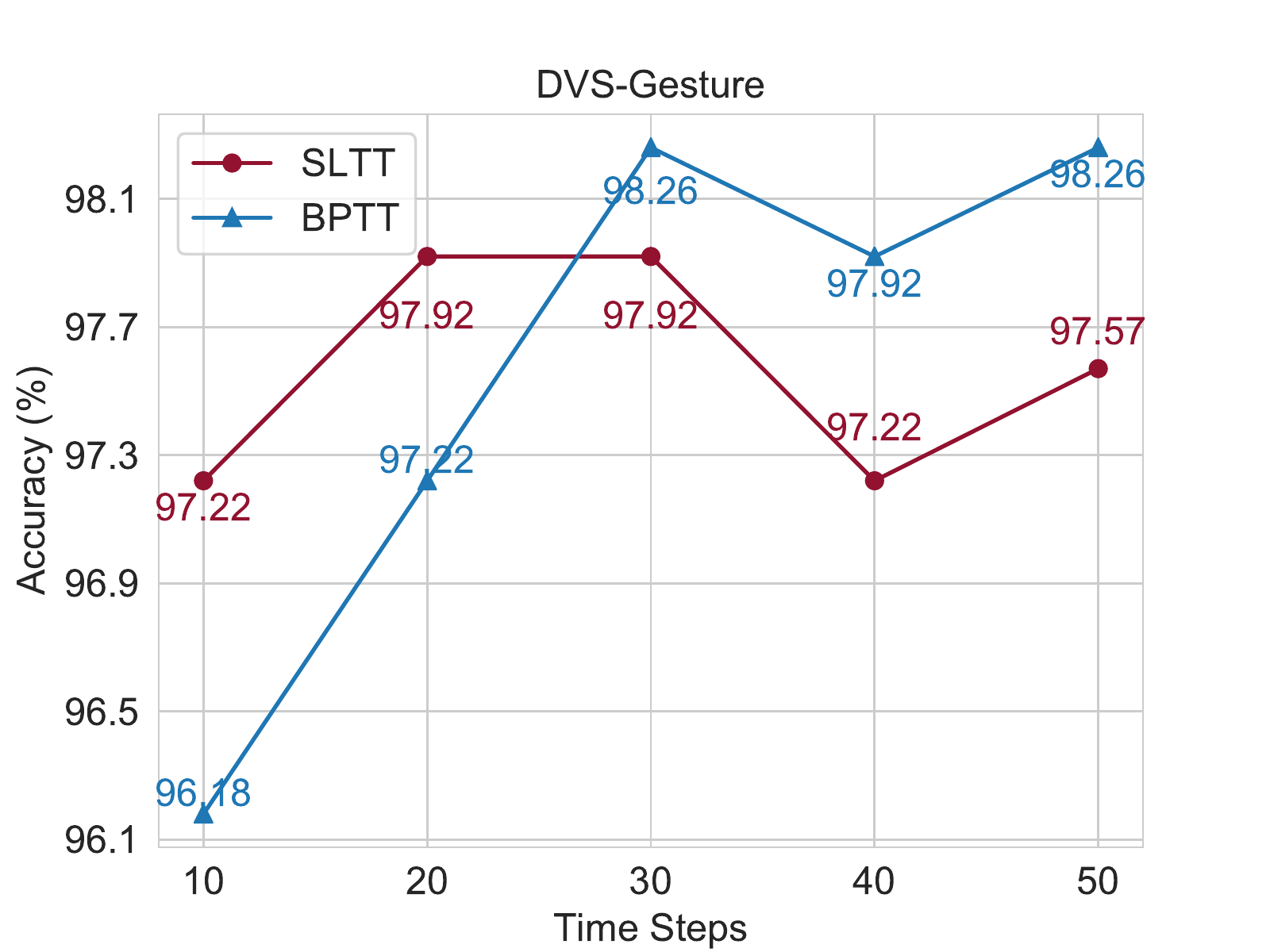} 
		%\caption{}
		\label{fig:dvsgesture_step}
	\end{subfigure}
	\begin{subfigure}{0.49\linewidth}
		\includegraphics[width=1.12\columnwidth]{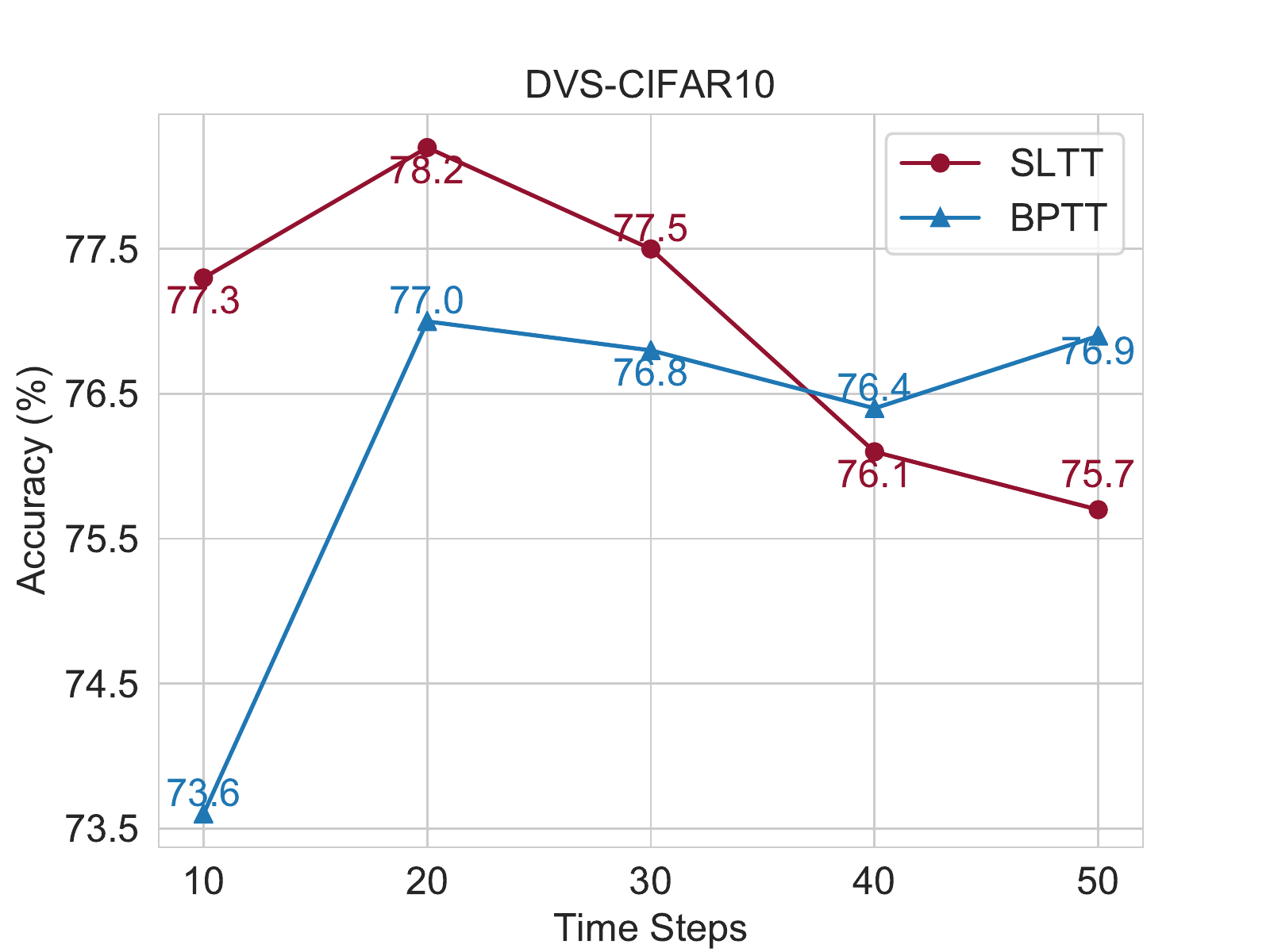} 
		%\caption{}
		\label{fig:dvscifar_step}
	\end{subfigure}
	\begin{subfigure}{0.49\linewidth}
		\vspace{-1.0em}
		\includegraphics[width=1.12\columnwidth]{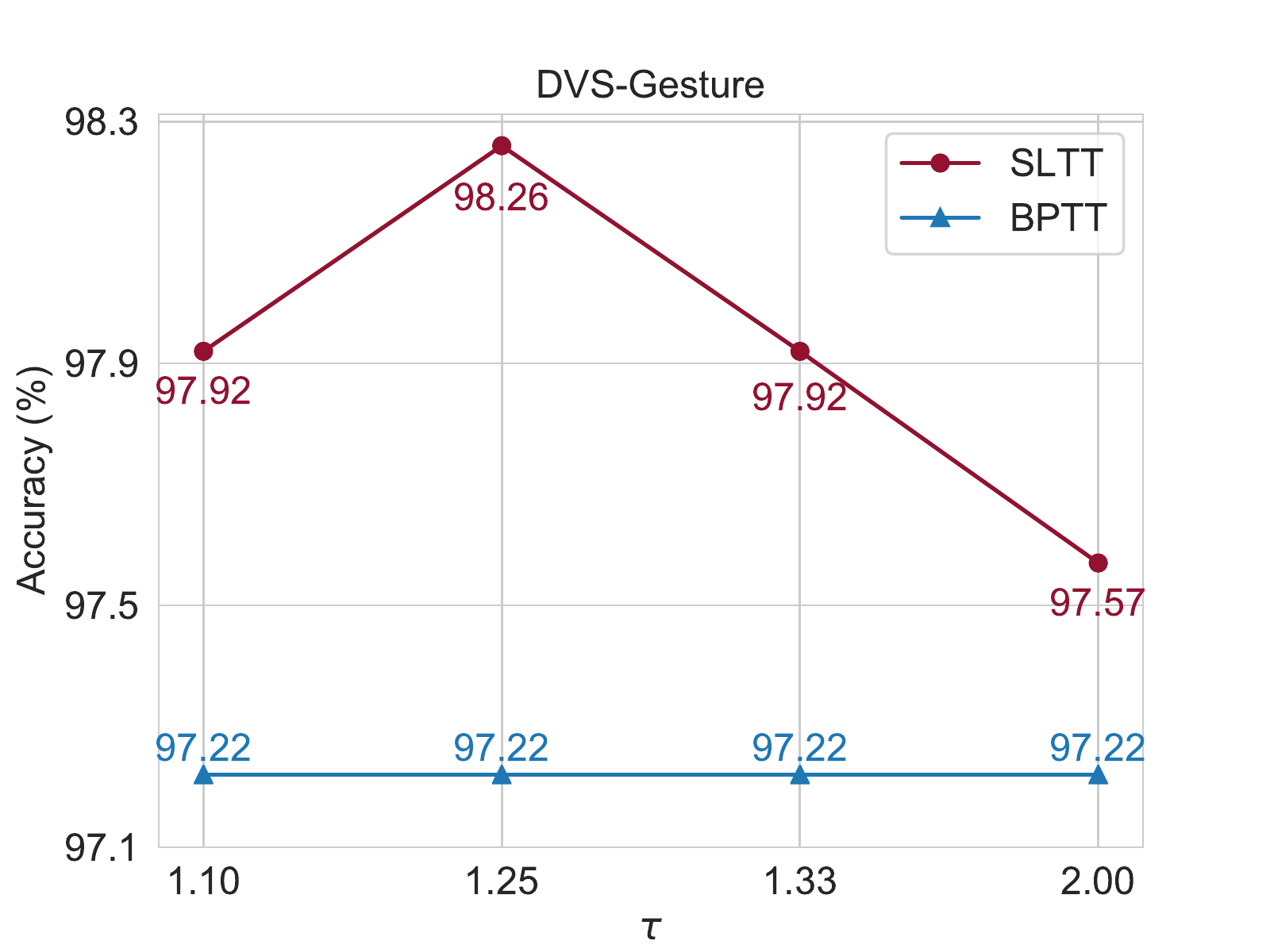} 
		\label{fig:dvsgesture_tau}
	\end{subfigure}
	\begin{subfigure}{0.49\linewidth}
		\vspace{-1.0em}
		\includegraphics[width=1.12\columnwidth]{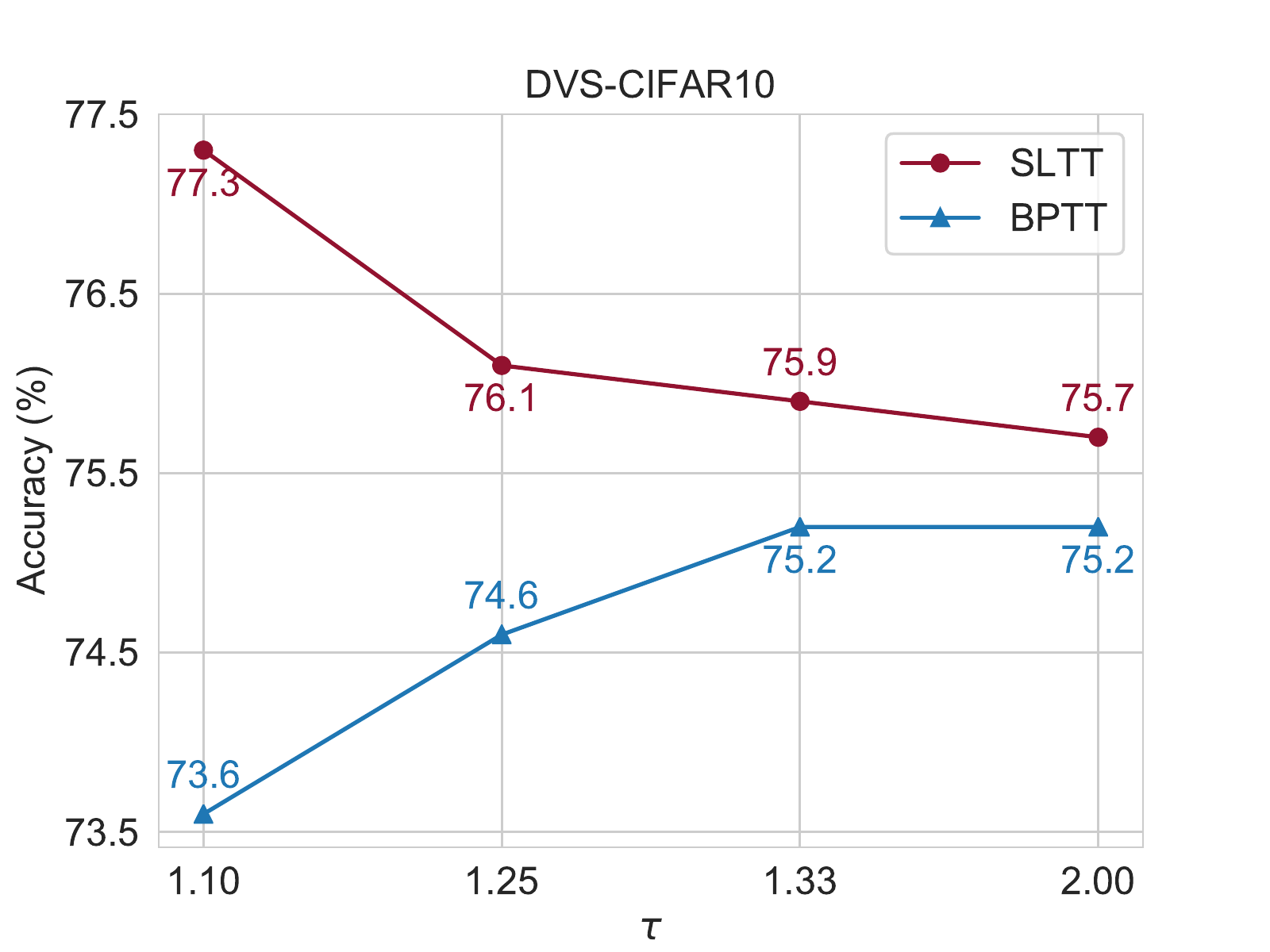} 
		\label{fig:dvscifar_tau}
	\end{subfigure}
	\vspace{-1.3em}
	\caption{\small Performance of SLTT and BPTT for different number of time steps (the top two subfigures) and for different $\tau$ (the bottom two subfigures). Experiments are conducted on the neuromorphic datasets, DVS-Gesture and DVS-CIFAR10.}
	\label{fig:diff_tau_step}
	\vspace{-3pt}
\end{figure}

For the BPTT-based methods, there is hardly any implementation of large-scale network architectures on ImageNet due to the significant training costs. To our knowledge, only Fang \etal \cite{fang2021sew} leverage BPTT to train an SNN with more than 100 layers, while the training process requires near 90G GPU memory for $T=4$. Our SLTT-2 method succeeds in training the same-scale ResNet-101 network with only 34G memory occupation and 4.10h of training time per epoch (\cref{table:SLTT-k,table:sota}). Compared with BPTT, the training memory and time of SLTT-2 are reduced by more than 70\% and 50\%, respectively. Furthermore, since the focus of the SOTA BPTT-type methods (\eg, surrogate function, network architecture, and regularization) are orthogonal to ours, our training techniques can be plugged into their methods to achieve better training efficiency. 
Some ANN-to-SNN-based and spike representation-based methods \cite{yang2022training,bu2022optimal,li2021free} also achieve satisfactory accuracy with relatively small training costs. However, they typically require a (much) larger number of time steps (\cref{table:sota}), which hurts the energy efficiency for neuromorphic computing.

\subsection{Influence of $T$ and $\tau$}

For efficient training, the SLTT method approximates the gradient calculated by BPTT by ignoring the temporal components in \cref{eqn:u-update-final,eqn:u-update-inter}. So when $T$ or $\tau$ is large, the approximation may not be accurate enough. In this subsection, we conduct experiments with different $\tau$ and $T$ on the neuromorphic datasets, DVS-Gesture and DVS-CIFAR10. We verify that the proposed method can still work well for large $T$ and commonly used $\tau$ \cite{xiao2022online,deng2022temporal,zheng2020going,guo2022recdis}, as shown in \cref{fig:diff_tau_step}. Regarding large time steps, SLTT obtains similar accuracy with BPTT even when $T=50$, and SLTT can outperform BPTT when $T<30$ on the two neuromorphic datasets.
For different $\tau$, our method can consistently perform better than BPTT, although there is a performance drop for SLTT when $\tau$ is large. 

\section{Conclusion}
\label{sec:conclusion}
In this work, we propose the Spatial Learning Through Time (SLTT) method that significantly reduces the time and memory complexity compared with the vanilla BPTT with SG method. We first show that the backpropagation of SNNs through the temporal domain contributes a little to the final calculated gradients. By ignoring unimportant temporal components in gradient calculation and introducing an online calculation scheme, our method reduces the scalar multiplication operations and achieves time-step-independent memory occupation.
Additionally, thanks to the instantaneous gradient calculation in our method, we propose a variant of SLTT, called SLTT-K, that allows backpropagation only at $K$ time steps. SLTT-K can further reduce the time complexity of SLTT significantly. Extensive experiments on large-scale static and neuromorphic datasets demonstrate superior training efficiency and high performance of the proposed method, and illustrate the method's effectiveness under different network settings and large-scale network structures.

\section*{Acknowledgment}
Z.~Lin was supported by National Key R\&D Program of China (2022ZD0160302), the NSF China (No. 62276004), and Qualcomm. The work of Z.-Q. Luo was supported in part by the National Key Research and Development Project under grant 2022YFA1003900, and in part by the Guangdong Provincial Key Laboratory of Big Data Computing. The work of Yisen Wang was supported by the National Natural Science Foundation of China (62006153) and the Open Research Projects of Zhejiang Lab (No. 2022RC0AB05).

{\small
\bibliographystyle{ieee_fullname}
\bibliography{egbib}
}

\appendix
\setcounter{table}{0}  
\setcounter{figure}{0}
\setcounter{equation}{0}
\renewcommand{\theequation}{S\arabic{equation}}
\renewcommand{\thetable}{S\arabic{table}}
\renewcommand{\thefigure}{S\arabic{figure}}
\renewcommand{\thealgorithm}{S\arabic{algorithm}}

\section{Detailed Dynamics of Feedforward SNNs with LIF Neurons}
An SNN operates by receiving an input time sequence and generating an output time sequence (binary spike train) through the iterative use of brain-inspired neuronal dynamics.
We consider feedforward SNNs with LIF neurons in this work. Specifically, the network follows the difference equation:
\begin{equation} \label{eqn:dynamics_appendix}
\left\{
\begin{aligned}
&\mathbf{u}^{l}[t]=(1-\frac{1}{\tau})\mathbf{v}^{l}[t-1] + \mathbf{W}^{l} \mathbf{s}^{l-1}[t], \\
&\mathbf{s}^{l}[t]=H(\mathbf{u}^{l}[t]-V_{th}),  \\
&\mathbf{v}^{l}[t]=\mathbf{u}^{l}[t] - V_{th}\mathbf{s}^{l}[t],
\end{aligned}
\right.
\end{equation}
where $l=1,\cdots,L$ is the layer index, $t=1,\cdots,T$ is the time step index, $\{\mathbf{s}^{0}[t]\}_{t=1}^T$ are the input to the network, and $\{\mathbf{s}^{l}[t]\}_{t=1}^T$ are the binary output sequence of the $l$-th layer, $\tau$ is a pre-defined time constant, $H(\cdot)$ is the the Heaviside step function, $\mathbf{u}^{l}[t]$ is the membrane potential before resetting, $\mathbf{v}^{l}[t]$ is the potential after resetting, and $\mathbf{W}^{l}$ are the weight to be trained. $\mathbf{v}^{l}[0]$ is set to be 0 for $l=1,\cdots,L$. 

The input $\mathbf{s}^0$ to the SNN can consist of either neuromorphic data or static data such as images. While neuromorphic data are inherently suitable for SNNs, when dealing with static data, a common approach is to repeatedly apply them to the first layer at each time step \cite{xiao2021ide,zheng2020going,rueckauer2017conversion,meng2022training}. This allows the first layer to serve as a spike-train data generator.

The output of the network $\{\mathbf{s}^{L}[t]\}_{t=1}^T$ is utilized for decision making. In classification tasks, a common approach involves computing $\mathbf{o} = \frac{1}{T}\sum_{t=1}^{T} \mathbf{W}^o\mathbf{s}^{L}[t]$, where $\mathbf{o} \in \mathbb{R}^c$ and $c$ represents the number of classes. The input sequence $\{\mathbf{s}^{0}[t]\}_{t=1}^T$ is classified as belonging to the $i$-th class if $\mathbf{o}_i$ is the largest value among $\{\mathbf{o}_1,\cdots,\mathbf{o}_c\}$.

Due to the binary nature of spike communication between neurons, SNNs can be efficiently implemented on neuromorphic chips, enabling energy-efficient applications.

\section{Derivation for \cref{eqn:u-update-final,eqn:u-update-inter}}
Recall that  
\begin{equation} \small \label{eqn:epsilon}
\textcolor{black}{\mathbf{\epsilon}^{l}[t]} \triangleq
\frac{\partial \mathbf{u}^{l}[t+1]}{\partial \mathbf{u}^{l}[t]}
%*
+\frac{\partial \mathbf{u}^{l}[t+1]}{\partial \mathbf{s}^{l}[t]}
%*
\frac{\partial \mathbf{s}^{l}[t]}{\partial \mathbf{u}^{l}[t]},
\end{equation}
which is the dependency between $\mathbf{u}^l[t+1]$ and $\mathbf{u}^l[t]$.
We derive \cref{eqn:u-update-inter} as below. We omit the derivation for \cref{eqn:u-update-final} since it is a simple corollary of \cref{eqn:u-update-inter}.

\begin{lemma}[\cref{eqn:u-update-inter}]
	\begin{equation} \small \label{eqn:u-update-inter-supp}
	\begin{aligned}
	\frac{\partial \mathcal{L}}{\partial \mathbf{u}^{l}[t]}
	%*
	=&\textcolor{newblue}{
		\frac{\partial \mathcal{L}}{\partial \mathbf{u}^{l+1}[t]}
		%*
		\frac{\partial \mathbf{u}^{l+1}[t]}{\partial \mathbf{s}^{l}[t]}
		%*
		\frac{\partial \mathbf{s}^{l}[t]}{\partial \mathbf{u}^{l}[t]}}
	\\
	&+ \textcolor{newgreen}{
		\sum_{t^\prime=t+1}^{T}
		%*
		\frac{\partial \mathcal{L}}{\partial \mathbf{u}^{l+1}[t^\prime]}
		%*
		\frac{\partial \mathbf{u}^{l+1}[t^\prime]}{\partial \mathbf{s}^{l}[t^\prime]}
		%*
		\frac{\partial \mathbf{s}^{l}[t^\prime]}{\partial \mathbf{u}^{l}[t^\prime]}
		%*
		\prod_{t^{\dprime}=1}^{t^\prime - t}
		%*
		\mathbf{\epsilon}^{l}[t^\prime-t^\dprime]
	}.
	\end{aligned}
	\end{equation}
\end{lemma}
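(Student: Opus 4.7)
The plan is to prove \cref{eqn:u-update-inter-supp} by backward induction on the time index $t$, using the chain rule applied to the computational graph drawn in \cref{fig:bptt}. The key observation is that $\mathbf{u}^{l}[t]$ influences the loss through exactly two classes of outgoing edges: the spatial edge into $\mathbf{s}^{l}[t]$ and then into $\mathbf{u}^{l+1}[t]$, and the temporal edges into $\mathbf{u}^{l}[t+1]$ (both directly through the leak term and indirectly through the reset via $\mathbf{s}^{l}[t]$). The latter two contributions are exactly what the sensitivity $\mathbf{\epsilon}^{l}[t]$ in \cref{eqn:epsilon} packages together.

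First I would write the one-step recursion. By the multivariable chain rule applied at node $\mathbf{u}^{l}[t]$, and recognizing that the only nodes at time $t+1$ reached from $\mathbf{u}^{l}[t]$ are $\mathbf{u}^{l}[t+1]$ (through both branches constituting $\mathbf{\epsilon}^{l}[t]$) and $\mathbf{u}^{l+1}[t]$ (through $\mathbf{s}^{l}[t]$), one obtains
\begin{equation*}
\frac{\partial \mathcal{L}}{\partial \mathbf{u}^{l}[t]}
= \frac{\partial \mathcal{L}}{\partial \mathbf{u}^{l+1}[t]}\frac{\partial \mathbf{u}^{l+1}[t]}{\partial \mathbf{s}^{l}[t]}\frac{\partial \mathbf{s}^{l}[t]}{\partial \mathbf{u}^{l}[t]} + \frac{\partial \mathcal{L}}{\partial \mathbf{u}^{l}[t+1]}\,\mathbf{\epsilon}^{l}[t].
\end{equation*}
This step is really just reading off the graph in \cref{fig:bptt}; the slight subtlety is that the edge $\mathbf{s}^{l}[t]\!\to\!\mathbf{u}^{l}[t+1]$ (reset) and the edge $\mathbf{u}^{l}[t]\!\to\!\mathbf{u}^{l}[t+1]$ (leak) must be lumped together into $\mathbf{\epsilon}^{l}[t]$ to avoid double counting the spatial contribution from $\mathbf{s}^{l}[t]$.

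Next I would run backward induction on $t$, starting from $t = T$, where the summation in \cref{eqn:u-update-inter-supp} is empty and the claim reduces to the purely spatial term (this follows from the one-step recursion since $\mathbf{u}^{l}[T]$ has no temporal successor). For the inductive step, assume \cref{eqn:u-update-inter-supp} at time $t+1$; substitute that expression in place of $\frac{\partial \mathcal{L}}{\partial \mathbf{u}^{l}[t+1]}$ in the one-step recursion; multiply through by $\mathbf{\epsilon}^{l}[t]$; and reindex. The spatial-at-$t$ term stays as-is and matches the blue term of \cref{eqn:u-update-inter-supp}. The remaining contributions give, for each $t' \geq t+1$, the factor
\begin{equation*}
\frac{\partial \mathcal{L}}{\partial \mathbf{u}^{l+1}[t']}\frac{\partial \mathbf{u}^{l+1}[t']}{\partial \mathbf{s}^{l}[t']}\frac{\partial \mathbf{s}^{l}[t']}{\partial \mathbf{u}^{l}[t']}\Bigl(\prod_{t^{\dprime}=1}^{t'-(t+1)} \mathbf{\epsilon}^{l}[t'-t^{\dprime}]\Bigr)\,\mathbf{\epsilon}^{l}[t],
\end{equation*}
and the appended $\mathbf{\epsilon}^{l}[t]$ extends the product range to $t^{\dprime}=1,\ldots,t'-t$, matching \cref{eqn:u-update-inter-supp}.

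The main obstacle I expect is not conceptual but notational: one must be careful that the product $\prod_{t^{\dprime}=1}^{t'-t}\mathbf{\epsilon}^{l}[t'-t^{\dprime}]$ is ordered correctly as a product of non-commuting (diagonal) matrices, and that the newly introduced factor $\mathbf{\epsilon}^{l}[t]$ is appended on the right so that the indices $t'-1, t'-2,\ldots,t+1, t$ are produced in the correct order by the reindexing. Once the index bookkeeping is handled, the induction closes in one line and recovers \cref{eqn:u-update-inter-supp}; \cref{eqn:u-update-final} follows by the same argument with the output-layer boundary $\frac{\partial \mathcal{L}}{\partial \mathbf{s}^{L}[t]}$ replacing the role of $\frac{\partial \mathcal{L}}{\partial \mathbf{u}^{l+1}[t]}\frac{\partial \mathbf{u}^{l+1}[t]}{\partial \mathbf{s}^{l}[t]}$.
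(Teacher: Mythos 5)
Your proposal is correct and follows essentially the same route as the paper's own proof: the one-step chain-rule recursion $\frac{\partial \mathcal{L}}{\partial \mathbf{u}^{l}[t]} = \frac{\partial \mathcal{L}}{\partial \mathbf{u}^{l+1}[t]}\frac{\partial \mathbf{u}^{l+1}[t]}{\partial \mathbf{s}^{l}[t]}\frac{\partial \mathbf{s}^{l}[t]}{\partial \mathbf{u}^{l}[t]} + \frac{\partial \mathcal{L}}{\partial \mathbf{u}^{l}[t+1]}\,\mathbf{\epsilon}^{l}[t]$ followed by backward induction on $t$ with the same substitution and product reindexing. The only nitpick is that the $\mathbf{\epsilon}^{l}[\cdot]$ factors are diagonal and hence actually commute, so the ordering concern you flag is harmless rather than essential.
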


\begin{proof}
	According to \cref{fig:bptt} and the chain rule, we have  
	\begin{equation} \small \label{eqn:chain-rule1}
	\textcolor{red}{\frac{\partial \mathcal{L}}{\partial \mathbf{u}^{l}[T]}}
	%*
	=\textcolor{black}{
		\frac{\partial \mathcal{L}}{\partial \mathbf{u}^{l+1}[T]}
		%*
		\frac{\partial \mathbf{u}^{l+1}[T]}{\partial \mathbf{s}^{l}[T]}
		%*
		\frac{\partial \mathbf{s}^{l}[T]}{\partial \mathbf{u}^{l}[T]}
	},
	\end{equation}
	and
	\begin{equation} \small \label{eqn:chain-rule2}
	\begin{aligned}
	\textcolor{red}{\frac{\partial \mathcal{L}}{\partial \mathbf{u}^{l}[t]}}
	%*
	=&\textcolor{black}{
		\frac{\partial \mathcal{L}}{\partial \mathbf{u}^{l+1}[t]}
		%*
		\frac{\partial \mathbf{u}^{l+1}[t]}{\partial \mathbf{s}^{l}[t]}
		%*
		\frac{\partial \mathbf{s}^{l}[t]}{\partial \mathbf{u}^{l}[t]}
	}
	+ \textcolor{red}{\frac{\partial \mathcal{L}}{\partial \mathbf{u}^{l}[t+1]}}
	%*
	\mathbf{\epsilon}^l[t]
	,
	\end{aligned}
	\end{equation}
	when $t = T-1, \cdots, 1$. \cref{eqn:chain-rule1,eqn:chain-rule2} are standard steps in the Backpropagation through time (BPTT) algorithm.
	
	Then we drive \cref{eqn:u-update-inter-supp} from \cref{eqn:chain-rule2} by induction w.r.t. $t$. When $t=T-1$,
	\begin{equation} \small 
	\begin{aligned}
	\textcolor{red}{\frac{\partial \mathcal{L}}{\partial \mathbf{u}^{l}[T-1]}}
	%*
	=&\textcolor{black}{
		\frac{\partial \mathcal{L}}{\partial \mathbf{u}^{l+1}[T-1]}
		%*
		\frac{\partial \mathbf{u}^{l+1}[T-1]}{\partial \mathbf{s}^{l}[T-1]}
		%*
		\frac{\partial \mathbf{s}^{l}[T-1]}{\partial \mathbf{u}^{l}[T-1]}
	}
	+ \textcolor{red}{\frac{\partial \mathcal{L}}{\partial \mathbf{u}^{l}[T]}}
	%*
	\mathbf{\epsilon}^l[T-1]
	,
	\end{aligned}
	\end{equation}
	which satisfies \cref{eqn:u-update-inter-supp}.
	When $t<T-1$, we assume \cref{eqn:u-update-inter-supp} is satisfied for $t+1$, then show that \cref{eqn:u-update-inter-supp} is satisfied for $t$: % by \cref{eqn:chain-rule}:
	\begin{equation} \small \label{eqn:induction}
	\hspace{-1.7em}
	\begin{aligned}
	\textcolor{red}{\frac{\partial \mathcal{L}}{\partial \mathbf{u}^{l}[t]}}
	%*
	=&\textcolor{newblue}{
		\frac{\partial \mathcal{L}}{\partial \mathbf{u}^{l+1}[t]}
		%*
		\frac{\partial \mathbf{u}^{l+1}[t]}{\partial \mathbf{s}^{l}[t]}
		%*
		\frac{\partial \mathbf{s}^{l}[t]}{\partial \mathbf{u}^{l}[t]}
	}
	+ \textcolor{red}{\frac{\partial \mathcal{L}}{\partial \mathbf{u}^{l}[t+1]}}
	%*
	\mathbf{\epsilon}^l[t]
	\\
	=&\textcolor{newblue}{
		\frac{\partial \mathcal{L}}{\partial \mathbf{u}^{l+1}[t]}
		%*
		\frac{\partial \mathbf{u}^{l+1}[t]}{\partial \mathbf{s}^{l}[t]}
		%*
		\frac{\partial \mathbf{s}^{l}[t]}{\partial \mathbf{u}^{l}[t]}
	}\\
	%*
	&+ \textcolor{red}{
		\left(
		\frac{\partial \mathcal{L}}{\partial \mathbf{u}^{l+1}[t+1]}
		%*
		\frac{\partial \mathbf{u}^{l+1}[t+1]}{\partial \mathbf{s}^{l}[t+1]}
		%*
		\frac{\partial \mathbf{s}^{l}[t+1]}{\partial \mathbf{u}^{l}[t+1]}\right.
	}
	%*
	\\
	& \quad \quad \textcolor{red}{+} 
	\textcolor{red}{
		\left.\sum_{t^\prime=t+2}^{T}
		%*
		\frac{\partial \mathcal{L}}{\partial \mathbf{u}^{l+1}[t^\prime]}
		%*
		\frac{\partial \mathbf{u}^{l+1}[t^\prime]}{\partial \mathbf{s}^{l}[t^\prime]}
		%*
		\frac{\partial \mathbf{s}^{l}[t^\prime]}{\partial \mathbf{u}^{l}[t^\prime]}
		%*
		\prod_{t^{\dprime}=1}^{t^\prime - t - 1}
		%*
		\mathbf{\epsilon}^{l}[t^\prime-t^\dprime]
		\right)} \mathbf{\epsilon}^l[t]
	\\
	%%%%%%%%%%%%%%%%%
	=&\textcolor{newblue}{ 
		\frac{\partial \mathcal{L}}{\partial \mathbf{u}^{l+1}[t]}
		%*
		\frac{\partial \mathbf{u}^{l+1}[t]}{\partial \mathbf{s}^{l}[t]}
		%*
		\frac{\partial \mathbf{s}^{l}[t]}{\partial \mathbf{u}^{l}[t]}
	} + 
	\frac{\partial \mathcal{L}}{\partial \mathbf{u}^{l+1}[t+1]}
	%*
	\frac{\partial \mathbf{u}^{l+1}[t+1]}{\partial \mathbf{s}^{l}[t+1]}
	%*
	\frac{\partial \mathbf{s}^{l}[t+1]}{\partial \mathbf{u}^{l}[t+1]} 
	\mathbf{\epsilon}^l[t]
	\\
	& +
	\textcolor{black}{
		\sum_{t^\prime=t+2}^{T}
		%*
		\frac{\partial \mathcal{L}}{\partial \mathbf{u}^{l+1}[t^\prime]}
		%*
		\frac{\partial \mathbf{u}^{l+1}[t^\prime]}{\partial \mathbf{s}^{l}[t^\prime]}
		%*
		\frac{\partial \mathbf{s}^{l}[t^\prime]}{\partial \mathbf{u}^{l}[t^\prime]}
		%*
		\left(
		\prod_{t^{\dprime}=1}^{t^\prime - t - 1}
		%*
		\mathbf{\epsilon}^{l}[t^\prime-t^\dprime]
		\right)
		\mathbf{\epsilon}^l[t]
	}
	\\
	=&\textcolor{newblue}{
		\frac{\partial \mathcal{L}}{\partial \mathbf{u}^{l+1}[t]}
		%*
		\frac{\partial \mathbf{u}^{l+1}[t]}{\partial \mathbf{s}^{l}[t]}
		%*
		\frac{\partial \mathbf{s}^{l}[t]}{\partial \mathbf{u}^{l}[t]}}
	\\
	&+ \textcolor{newgreen}{
		\sum_{t^\prime=t+1}^{T}
		%*
		\frac{\partial \mathcal{L}}{\partial \mathbf{u}^{l+1}[t^\prime]}
		%*
		\frac{\partial \mathbf{u}^{l+1}[t^\prime]}{\partial \mathbf{s}^{l}[t^\prime]}
		%*
		\frac{\partial \mathbf{s}^{l}[t^\prime]}{\partial \mathbf{u}^{l}[t^\prime]}
		%*
		\prod_{t^{\dprime}=1}^{t^\prime - t}
		%*
		\mathbf{\epsilon}^{l}[t^\prime-t^\dprime]
	}
	,
	\end{aligned}
	\end{equation}
	where the first equation is due to \cref{eqn:chain-rule2}, and the second equation is due to the assumption that \cref{eqn:u-update-inter-supp} is satisfied for $t+1$.
\end{proof}

\begin{algorithm}[h] 
	\caption{One iteration of SNN training with the BPTT with SG method.}
	\label{alg:bptt}
	\begin{algorithmic}[1] 
		\Require Time steps $T$; Network depth $L$; Network parameters $\{\mathbf{W}^l\}_{l=1}^L$; Training data $(\mathbf{s}^0,\mathbf{y})$; Learning rate $\eta$.
		\State //\textbf{Forward:}
		\For {$t=1,2,\cdots,T$}
		\For {$l=1,2,\cdots,L$}
		\State Calculate $\mathbf{u}^l[t]$ and $\mathbf{s}^l[t]$ by \cref{eqn:appendix_forward,eqn:appendix_spike};
		\EndFor
		\EndFor
		
		\State Calculate the loss $\mathcal{L}$ based on $\mathbf{s}^L$ and $\mathbf{y}$.
		\State //\textbf{Backward:}
		\State $\mathbf{e}_\mathbf{s}^L[T] = \frac{\partial \mathcal{L}}{\partial \mathbf{s}^{L}[T]}$;
		\For {$l=L-1,\cdots,1$}
		\State $\mathbf{e}_\mathbf{u}^l[T] = \mathbf{e}_\mathbf{s}^{l+1}[T] \frac{\partial \mathbf{s}^{l+1}[T]}{\partial \mathbf{u}^{l}[T]}$, 
		$\mathbf{e}_\mathbf{s}^l[T] = \mathbf{e}_\mathbf{u}^l[T] \frac{\partial \mathbf{u}^{l}[T]}{\partial \mathbf{s}^{l}[T]}$;
		\EndFor	
		
		\For {$t=T-1 ,T-2,\cdots,1$}
		\For {$l=L ,L-1,\cdots,1$}
		\If {$l=L$}
		\State $\mathbf{e}_\mathbf{s}^L[t] = 
		\frac{\partial \mathcal{L}}{\partial \mathbf{s}^{L}[t]} + \mathbf{e}_\mathbf{u}^L[t+1]
		\frac{\partial \mathbf{u}^{L}[t+1]}{\partial \mathbf{s}^{L}[t]}
		$;
		\Else
		\State $\mathbf{e}_\mathbf{s}^l[t] = \mathbf{e}_\mathbf{u}^{l+1}[t]
		\frac{\partial \mathbf{u}^{l+1}[t]}{\partial \mathbf{s}^{l}[t]} + \mathbf{e}_\mathbf{u}^l[t+1]
		\frac{\partial \mathbf{u}^{l}[t+1]}{\partial \mathbf{s}^{l}[t]}
		$;
		\EndIf
		\State 
		$\mathbf{e}_\mathbf{u}^l[t] = \mathbf{e}_\mathbf{s}^{l}[t]
		\frac{\partial \mathbf{s}^{l}[t]}{\partial \mathbf{u}^{l}[t]} + \mathbf{e}_\mathbf{u}^l[t+1]
		\frac{\partial \mathbf{u}^{l}[t+1]}{\partial \mathbf{u}^{l}[t]}
		$;
		\State $\Delta \mathbf{W}^l \mathrel{+}= \mathbf{e}_\mathbf{u}^{l}[t] ^\top \mathbf{s}^{l-1}[t]^\top $;
		\EndFor	
		\EndFor	
		
		\State $\mathbf{W}^l = \mathbf{W}^l - \eta \Delta \mathbf{W}^l, \ l=1,2,\cdots,L$;
		\Ensure Trained network parameters $\{\mathbf{W}^l\}_{l=1}^L$.
	\end{algorithmic}
\end{algorithm} 

\section{Time Complexity Analysis of SLTT and BPTT}

\subsection{Pseudocode of the Bachpropagation Though Time with Surrogate Gradinet Method}
We present the pseudocode of one iteration of SNN training with the bachpropagation though time (BPTT) with surrogate gradinet (SG) method in \cref{alg:bptt}. Note that the forward pass is defined by
\begin{equation} \label{eqn:appendix_forward}
\mathbf{u}^{l}[t]=(1-\frac{1}{\tau})(\mathbf{u}^{l}[t-1] -V_{t h} \mathbf{s}^{l}[t-1])+ \mathbf{W}^{l} \mathbf{s}^{l-1}[t],
\end{equation}
where $\mathbf{s}^l$ are the output spike trains of the $l^{\text{th}}$ layer, which are calculated by:
\begin{equation} \label{eqn:appendix_spike}
\mathbf{s}^{l}[t]=H(\mathbf{u}^{l}[t] -V_{t h}).
\end{equation}

\begin{algorithm}[h] 
	\caption{One iteration of SNN training with the SLTT or SLTT-K methods.}
	\label{alg:appendix_SLTT}
	\begin{algorithmic}[1] 
		\Require Time steps $T$; Network depth $L$; Network parameters $\{\mathbf{W}^l\}_{l=1}^L$; Training data $(\mathbf{s}^0,\mathbf{y})$; Learning rate $\eta$; Required backpropagation times $K$ (for SLTT-K).
		\item[\textbf{Initialize:}] $\Delta \mathbf{W}^l = 0, \ l=1,2,\cdots,L$.
		\If {using SLTT-K}
		\State Sample $K$ numbers in $[1,2,\cdots,T]$ w/o replacement to form $required\_bp\_steps$;
		\Else
		\State $required\_bp\_steps=[1,2,\cdots,T]$;
		\EndIf
		
		\For {$t=1,2,\cdots,T$}
		\State Calculate $\mathbf{s}^L[t]$ by \cref{eqn:appendix_forward,eqn:appendix_spike}; \quad //\textbf{Forward}
		\State Calculate the instantaneous loss $\ell$;
		\If {$t$ in $required\_bp\_steps$} \quad\quad //\textbf{Backward}
		\State $\mathbf{e}_\mathbf{u}^{L}[t] =\frac{1}{T}\frac{\partial \ell}{\partial \mathbf{s}^{L}[t]}\frac{\partial \mathbf{s}^{L}[t]}{\partial \mathbf{u}^{L}[t]}$;
		\For {$l=L-1,\cdots,1$}
		\State $\mathbf{e}_\mathbf{u}^l[t] = \mathbf{e}_\mathbf{u}^{l+1}[t]
		\frac{\partial \mathbf{u}^{l+1}[t]}{\partial \mathbf{s}^{l}[t]}
		\frac{\partial \mathbf{s}^{l}[t]}{\partial \mathbf{u}^{l}[t]}$;
		\State $\Delta \mathbf{W}^l \mathrel{+}= \mathbf{e}_\mathbf{u}^{l}[t] ^\top \mathbf{s}^{l-1}[t]^\top$;
		\EndFor
		\EndIf
		\EndFor
		
		\State $\mathbf{W}^l = \mathbf{W}^l - \eta \Delta \mathbf{W}^l, \ l=1,2,\cdots,L$;
		
		\Ensure Trained network parameters $\{\mathbf{W}^l\}_{l=1}^L$.
	\end{algorithmic}
\end{algorithm} 

\subsection{Time Complexity Analysis}
The time complexity of each time step is dominated by the number of scalar multiplication operations. In this subsection, we analyze the required scalar multiplications of the Spatial Learning
Through Time (SLTT) and BPTT with SG methods. We show the pseudocode of SLTT in \cref{alg:appendix_SLTT} again for better presentation.

Consider that each layer has $d$ neurons. For simplicity, we only consider the scalar multiplications for one intermediate time step ($t<T$) and one intermediate layer ($l<L$). 
Regarding the BPTT with SG method, it requires scalar multiplications to update $\mathbf{e}_\mathbf{s}^l[t]$, $\mathbf{e}_\mathbf{u}^l[t]$, and  $\Delta \mathbf{W}^l$ at lines 18, 20, and 21 respectively in \cref{alg:bptt}. To update $\mathbf{e}_\mathbf{s}^l[t]$, two vector-Jacobian products are required. Since $\frac{\partial \mathbf{u}^{l}[t+1]}{\partial \mathbf{s}^{l}[t]}$ is a diagonal matrix, the number of scalar multiplications for updating  $\mathbf{e}_\mathbf{s}^l[t]$ is $d^2+d$. To update $\mathbf{e}_\mathbf{u}^l[t]$ at line 20, the number is $2d$, since the Jacobians $\frac{\partial \mathbf{s}^{l}[t]}{\partial \mathbf{u}^{l}[t]}$ and $\frac{\partial \mathbf{u}^{l}[t+1]}{\partial \mathbf{u}^{l}[t]}$ are both diagonal. It requires $d^2$ scalar multiplications to update $\Delta \mathbf{W}^l$.
Regarding the SLTT method, it requires scalar multiplications to update $\mathbf{e}_\mathbf{u}^l[t]$ and  $\Delta \mathbf{W}^l$ at lines 12 and 13, respectively, in \cref{alg:appendix_SLTT}. It requires $d^2+d$ scalar multiplications to update $\mathbf{e}_\mathbf{u}^l[t]$, and requires $d^2$ to update $\Delta \mathbf{W}^l$. Then compared with the BPTT with SG method, SLTT reduces the number of scalar multiplications by $2d$ for one intermediate time step and one intermediate layer. For SLTT-K, it requires updating $\mathbf{e}_\mathbf{u}^l[t]$ and  $\Delta \mathbf{W}^l$ only at $K$ randomly chosen time steps. Then the required number of scalar multiplication operations is proportional to $K$, which is proportional to $T$ for SLTT and BPTT.

\section{Hard Reset Mechanism}

In this work, we adopt the LIF model with soft reset as the neuron model, as shown in \cref{eqn:lif}. Besides the soft reset mechanism, hard reset is also applicable for our method. Specifically, for the LIF model with hard reset
\begin{equation} \small \label{eqn:lif-hard}
\left\{ 
\begin{aligned}
&u[t]=(1-\frac{1}{\tau})v[t-1] + \sum_{i}w_i s_i[t] + b, \\[-3pt]
&s_{out}[t]=H(u[t]-V_{th}),  \\
&v[t]=u[t] \cdot (1-s_{out}[t]),
\end{aligned}
\right. 
\end{equation}
we can also observe that the temporal components contribute a little to $\frac{\partial \mathcal{L}}{\partial \mathbf{u}^{l}[t]}$ and the observation in \cref{sec:observation} still holds. Consider the rectangle surrogate (\cref{eqn:rectangle_sg}) with $\gamma=V_{th}$, the diagonal elements of $\textcolor{black}{\mathbf{\epsilon}^{l}[t]}$, which is the dependency between $\mathbf{u}^l[t+1]$ and $\mathbf{u}^l[t]$, become
\begin{equation} \hspace{-15pt} \small \label{eqn:small_sensitivity_hard}
\textcolor{black}{\left(\mathbf{\epsilon}^{l}[t]\right)_{jj}} = \left\{\begin{array}{l} \lambda\left(1-(\mathbf{s}^l[t])_j-\frac{(\mathbf{u}^l[t])_j}{V_{th}}\right), \quad \frac{1}{2}V_{th}<\left(\mathbf{u}^{l}[t]\right)_j<\frac{3}{2}V_{th}, \\ \lambda(1-(\mathbf{s}^l[t])_j), \quad \quad \ \ \quad \quad \quad \quad \  \text{otherwise}.\end{array}\right.
\end{equation}
Since $(\mathbf{s}^l[t])_j \in \{0,1\}$, we have $\textcolor{black}{\left(\mathbf{\epsilon}^{l}[t]\right)_{jj}} \in (-\frac{3}{2}\lambda, \frac{1}{2}\lambda)\cup\{\lambda\}$, which is at least not large for commonly used small $\lambda$. As a result, the spatial components in \cref{eqn:u-update-final,eqn:u-update-inter} dominate the gradients and then we can ignore the temporal components without much performance drop.

\begin{table}[t] 
	\caption{Comparison of accuracy between soft reset and hard reset on CIFAR-10 and DVS-CIFAR10.}
	\label{table:hard-soft}
	\centering
	\begin{threeparttable}
		\begin{tabular}{ccc}
			\toprule  
			Dataset  & Reset Mechanism & Acc  \\
			\midrule 
			\multirow{2}*{CIFAR-10}&  Soft & $94.44\%\pm0.21\%$  \\
			&  Hard & $94.34\%$   \\ 
			\hline
			\multirow{2}*{DVS-CIFAR10} &  Soft & $82.20\pm0.95\%$  \\
			&  Hard & $81.40\%$  \\
			\bottomrule
		\end{tabular}
	\end{threeparttable}
\end{table}

We conduct experiments with the hard reset mechanism on CIFAR-10 and DVS-CIFAR10, using the same training settings as for soft reset. And the comparison between hard reset and soft reset is shown in \cref{table:hard-soft}. We can see that our method can also achieve competitive results with hard reset.

\section{Performance Under the Smaller $T$ Setting}
We compare SLTT and BPTT with SG for smaller $T$ on CIFAR-10. The results are shown in \cref{table:small_t}. As $T$ decreases, the behaviors of both methods tend to converge, with SLTT consistently demonstrates better time and memory efficiency than BPTT with SG.

\begin{table}[t] 
	\caption{Comparison between SLTT and BPTT on CIFAR-10.}
	\label{table:small_t}
	\centering
	\begin{threeparttable}
		\begin{tabular}{ccccc}
			\toprule  
			Time Steps & Method  & Memory & Time & Acc \\
			\midrule 
			\multirow{2}*{$T=2$}  &  BPTT & 1.47G & 1.81h
			& 93.90\% \\
			&  SLTT & \bf{1.10G} & \bf{1.80h} & \bf{93.96\%} \\
			\hline
			\multirow{2}*{$T=4$} & BPTT & 2.19G & 3.41s & \bf{94.29\%} \\
			&  SLTT & \bf{1.09G} &  \bf{3.29h} & 94.17\% \\
			\hline
			\multirow{2}*{$T=6$} &  BPTT & 3.00G & 6.35h & \bf{94.60\%} \\
			&  SLTT & \bf{1.09G} & \bf{4.58h} & 94.59\% \\
			\bottomrule
		\end{tabular}
	\end{threeparttable}
\end{table}

\begin{table*}[h] \footnotesize
	\caption{\small Comparisons with other SNN training methods using the same network architectures on CIFAR-10, CIFAR-100, ImageNet, DVS-Gesture, and DVS-CIFAR10.}
	\label{table:sota-group}
	\centering
	\vspace{-7pt}
	\begin{threeparttable}
		\begin{tabular}{c|lcccc}
			\toprule[1.08pt] & Method & Network & Time Steps & Efficient Training  & Mean$\pm$Std (Best) \\
			\midrule[1.08pt]
			
			\multirow{10}*{\rotatebox{90}{CIFAR-10}} 
			&Dspike\cite{li2021differentiable} & \multirow{5}*{ResNet-18}  & 6 & \XSolidBrush & $94.25\pm0.07\%$  \\
			& TET\cite{deng2022temporal} &  & 6  & \XSolidBrush  & ${94.50\pm0.07\%}$  \\
			& DSR \cite{meng2022training} &  & 20 & \Checkmark & $95.40\pm0.15\%$ \\
			&RecDis\cite{guo2022recdis} &  & 6 & \XSolidBrush  & $\mathbf{95.55\pm0.05 \%}$ \\
			&\textbf{SLTT (ours)}  &  & 6 & \Checkmark & ${94.44\%\pm0.21\%}$ (${94.59\%}$) \\
			\cline{2-6}
			&LTL-Online \cite{yang2022training} \tnote{1} & \multirow{4}*{VGG-16} & 16 & \Checkmark  & ${92.85\%}$ \\
			&DIET-SNN \cite{rathi2020diet} \tnote{1} &  & 10 & \XSolidBrush  & ${93.44\%}$ \\
			&Temporal Pruning \cite{chowdhurytowards} &  & 5 & \XSolidBrush  & $\mathbf{93.90\%}$ \\
			&\textbf{SLTT (ours)}  &  & 6 & \Checkmark & ${93.28\%\pm0.02\%}$ (${93.29\%}$) \\
			\midrule[1.08pt]
			
			\multirow{9}*{\rotatebox{90}{CIFAR-100}} 
			&RecDis\cite{guo2022recdis} & \multirow{5}*{ResNet-18} & 4 & \XSolidBrush  & $74.10\pm0.13 \%$ \\
			&TET\cite{deng2022temporal} & & 6 & \XSolidBrush  & ${74.72\pm0.28\%}$  \\
			& DSR \cite{meng2022training} & & 20 & \Checkmark & $\mathbf{78.50\pm0.12\%}$ \\
			&Dspike\cite{li2021differentiable} &  & 6 & \XSolidBrush & $74.24\pm0.10\%$  \\
			&\textbf{SLTT (ours)} & & 6 & \Checkmark  & ${74.38\%\pm0.30\% \ (74.67\%)}$  \\
			\cline{2-6}
			&ANN-to-SNN\cite{bu2022optimal} \tnote{1} & \multirow{4}*{VGG-16} & 8 & \Checkmark  & $\mathbf{73.96}\%$ \\
			&DIET-SNN \cite{rathi2020diet} \tnote{1} & & 10 & \XSolidBrush  & ${69.67\%}$ \\
			&Temporal Pruning \cite{chowdhurytowards} &  & 5 & \XSolidBrush  & ${71.58\%}$ \\
			&\textbf{SLTT (ours)}  &  & 6 & \Checkmark  & $\underline{72.55\%\pm0.24\% \ (72.83\%)}$  \\
			\midrule[1.08pt]
			
			\multirow{13}*{\rotatebox{90}{ImageNet}} 
			& ANN-to-SNN \cite{meng2022ann} \tnote{1} & \multirow{6}*{ResNet-34}  & 256, 512 & \Checkmark  & $73.16\%, {74.18\%}$ \\	
			&ANN-to-SNN\cite{li2021free} \tnote{1} &  & 32, 256 & \Checkmark & ${64.54\%, \textbf{74.61\%}}$ \\
			&TET\cite{deng2022temporal} & & 6 & \XSolidBrush  & $64.79\%$ \\
			&OTTT\cite{xiao2022online} & & 6 & \Checkmark  & ${65.15\%}$ \\
			&SEW \cite{fang2021sew} &  & 4 & \XSolidBrush  & $67.04\%$ \\
			&\textbf{SLTT (ours)}  & & 6 & \Checkmark  & ${66.19\%}$  \\
			\cline{2-6}
			&STBP-tdBN \cite{zheng2020going} & \multirow{4}*{ResNet-50}  & 6 & \XSolidBrush  & $64.88\%$ \\
			&SEW \cite{fang2021sew} &  & 4 & \XSolidBrush  & ${67.78\%}$ \\
			& ANN-to-SNN \cite{meng2022ann} \tnote{1} &  & 256, 512 & \Checkmark  & ${73.56\%},\mathbf{75.04\%}$ \\
			&\textbf{SLTT (ours)}  & & 6 & \Checkmark  & ${67.02\%}$  \\
			\cline{2-6}
			& ANN-to-SNN \cite{meng2022ann} \tnote{1} & \multirow{4}*{ResNet-101}  & 256, 512 & \Checkmark  & $\mathbf{73.50\%,75.72\%}$ \\
			&SEW \cite{fang2021sew} &  & 4 & \XSolidBrush  & $68.76\%$ \\
			&\textbf{SLTT-2 (ours)}  &  & 6 & \Checkmark  & ${69.26\%}$  \\
			\midrule[1.08pt]
			
			\multirow{4}*{\rotatebox{90}{\small DVS-Gesture}} 
			&STBP-tdBN \cite{zheng2020going} & \multirow{2}*{ResNet-18} & 40 & \XSolidBrush  &  $96.87\%$ \\
			& SLTT (ours) & & 20  & \Checkmark  & $\mathbf{97.68\pm0.53\% \ (98.26\%)}$  \\
			\cline{2-6}
			&OTTT \cite{xiao2022online} & \multirow{2}*{VGG-11} & 20 & \Checkmark  &  $96.88\%$  \\
			&\textbf{SLTT (ours)}  &  & 20 & \Checkmark  & $\mathbf{98.50\pm0.21\% \ (98.62\%)}$ \\
			\midrule[1.08pt]
			
			\multirow{8}*{\rotatebox{90}{DVS-CIFAR10}} 
			&STBP-tdBN\cite{zheng2020going} & \multirow{4}*{ResNet-18} & 10 & \XSolidBrush  & $67.80\%$  \\
			&Dspike\cite{li2021differentiable} &  & 10 & \XSolidBrush  & $75.40\pm0.05\%$  \\
			& InfLoR\cite{guo2022reducing} &  & 10 & \XSolidBrush   & $75.50\pm0.12\%$  \\
			&\textbf{SLTT (ours)}  &  & 10 & \Checkmark & $\mathbf{81.87\pm0.49\% \ (82.20\%)}$ \\
			\cline{2-6}	
			& DSR \cite{meng2022training} & \multirow{4}*{VGG-11} & 20 & \Checkmark & $77.27\pm0.24\%$ \\
			&OTTT \cite{xiao2022online} &  & 10 & \Checkmark  &  $76.27\pm0.05\% (76.30\%)$  \\
			& TET\cite{deng2022temporal} &   & 10 & \XSolidBrush   & $\mathbf{83.17\pm0.15\%}$  \\
			&\textbf{SLTT (ours)}  &    & 10 & \Checkmark & ${82.20\pm0.95\% \ (83.10\%)}$ \\
			\bottomrule[1.08pt]
			
		\end{tabular}
		\small
		$^{1}$ Pre-trained ANN models are required.
	\end{threeparttable}
\end{table*}

\section{Comparison with the SOTA Using the Same Network Architectures}

We compare our method with other SOTA using the same network architectures, and the results are shown in \cref{table:sota-group}.
Our method achieves competitive results compared with the SOTA methods, while enabling efficient training at the same time. Furthermore, our method works on different network backbones, showing its effectiveness and applicability.

\section{Dataset Description and Preprocessing}

\paragraph{CIFAR-10} The CIFAR-10 dataset~\cite{krizhevsky2009learning} contains 60,000 32$\times$32 color images in 10 different classes, with 50,000 training samples and 10,000 testing samples. We normalize the image data to ensure that input images have zero mean and unit variance. We apply random cropping with 4 padding on each border of the image, random horizontal flipping, and cutout \cite{devries2017improved} for data augmentation. We apply direct encoding \cite{rathi2020diet} to encode the image pixels into time series. Specifically, the pixel values are repeatedly applied to the input layer at each time step. CIFAR-10 is licensed under MIT.

\paragraph{CIFAR-100} The CIFAR-100 dataset ~\cite{krizhevsky2009learning} is similar to CIFAR-10 except that it contains 100 classes of objects. There are 50,000 training samples and 10,000 testing samples, each of which is a 32$\times$32 color image. CIFAR-100 is licensed under MIT. We adopt the same data preprocessing and input encoding as CIFAR-10. 

\paragraph{ImageNet} The ImageNet-1K dataset~\cite{deng2009imagenet} contains color images in 1000 classes of objects, with 1,281,167 training images and 50,000 validation images. This dataset is licensed under Custom (non-commercial).  We normalize the image data to ensure that input images have zero mean and unit variance. For training samples, we apply random resized cropping to get images with size 224$\times$224, and then apply horizontal flipping. For validation samples, we resize the images to 256$\times$256 and then center-cropped them to 224$\times$224. We transform the pixels into time sequences by direct encoding \cite{rathi2020diet}, as done for CIFAR-10 and CIFAR-100.

\paragraph{DVS-Gesture} The DVS-Gesture \cite{amir2017low} dataset is recorded using a Dynamic Vision Sensor (DVS), consisting of spike trains with two channels corresponding to ON- and OFF-event spikes. The dataset contains 11 hand gestures from 29 subjects under 3 illumination conditions, with 1176 training samples and 288 testing samples. The license of  DVS-Gesture is Creative Commons Attribution 4.0. For data preprocessing, we follow \cite{fang2021incorporating} to integrate the events into frames. The event-to-frame integration is handled with the SpikingJelly \cite{SpikingJelly} framework.

\paragraph{DVS-CIFAR10} The DVS-CIFAR10 dataset~\cite{li2017cifar10} is a neuromorphic dataset converted from CIFAR-10 using a DVS camera.
It contains 10,000 event-based images with pixel dimensions expanded to 128$\times$128. The dataset is licensed under CC BY 4.0. We split the whole dataset into 9000 training images and 1000 testing images. Regarding data preprocessing, we integrate the events into frames \cite{fang2021incorporating}, and reduce the spatial resolution into 48$\times$48 by interpolation. For some experiments, we take random horizontal flip and random roll within 5 pixels as data augmentation, the same as \cite{deng2022temporal}.

\section{Network Architectures}
\label{sec:network}

\subsection{Scaled Weight Standardization}

An important characteristic of the proposed SLTT method is the instantaneous gradient calculation at each time step, which enables time-steps-independent memory costs. Under our instantaneous update framework, an effective technique, batch normalization (BN) along the temporal dimension \cite{zheng2020going,li2021differentiable,deng2022temporal,meng2022training}, cannot be adopted to our method, since this technique requires gathering data from all time steps to calculate the mean and variance statistics. For some tasks, we still use BN components, but their calculated statistics are based on data from each time step. For other tasks, we replace BN with scaled weight standardization (sWS) \cite{qiao2019micro,brock2021characterizing,brock2021high}, as introduced below.

The sWS component \cite{brock2021characterizing}, which is modified from the original weight standardization \cite{qiao2019micro}, normalizes the weights according to:
\begin{equation}
\hat{\mathbf{W}}_{i, j} = \gamma \frac{\mathbf{W}_{i, j}-\mu_{\mathbf{W}_{i, \cdot}}}{\sigma_{\mathbf{W}_{i, \cdot}}},
\end{equation}
where the mean $\mu_{\mathbf{W}_{i, \cdot}}$ and standard deviation $\sigma_{\mathbf{W}_{i, \cdot}}$ are calculated across the fan-in extent indexed by $i$, $N$ is the dimension of the fan-in extent, and $\gamma$ is a fixed hyperparameter. The hyperparameter $\gamma$ is set to stabilize the signal propagation in the forward pass. Specifically, for one network layer $\mathbf{z}=\hat{\mathbf{W}}g(\mathbf{x})$, where $g(\cdot)$ is the activation and the elements of $\mathbf{x}$ are i.i.d. from $\mathcal{N}(0,1)$, we determine $\gamma$ to make $\mathbb{E}(\mathbf{z})=0$, and $\operatorname{Cov}(\mathbf{z})=\mathbf{I}$. For SNNs, the activation $g(\cdot)$ at each time step can be treated as the Heaviside step function. Then we take $\gamma\approx2.74$ to stable the forward propagation, as calculated by \cite{xiao2022online}. Furthermore, 
sWS incorporate another learnable scaling factor for the weights \cite{brock2021characterizing,xiao2022online} to mimic the scaling factor of BN.
sWS shares similar effects with BN, but introduces no dependence of data from different batches and time steps. Therefore, sWS is a convincing alternative for replacing BN in our framework. 

\subsection{Normalization-Free ResNets}
\label{sec:nfnet}
For deep ResNets \cite{he2016deep}, sWS cannot enjoy the similar signal-preserving property as BN very well due to the skip connection. Then in some experiments, we consider the normalization-free ResNets (NF-ResNets) \cite{brock2021characterizing,brock2021high} that not only replace the BN components by sWS but also introduce other techniques to preserve the signal in the forward pass.

The NF-ResNets use the residual blocks of the form $x_{l+1}=x_{l}+\alpha f_{l}\left(x_{l} / \beta_{l}\right)$, where $\alpha$ and $\beta_l$ are hyperparameters used to stabilize signals, and the weights in $f_l(\cdot)$ are imposed with sWS.
The carefully determined $\beta_l$ and the sWS component together ensure that $f_l(x_l/\beta_l)$ has unit variance. $\alpha$ controls the rate of variance growth between blocks, and is set to be 0.2 in our experiments. Please refer to \cite{brock2021characterizing} for more details on the network design.

\subsection{Description of Adopted Network Architectures}

We adopt ResNet-18 \cite{he2016deep} with the pre-activation residual blocks \cite{he2016identity} to conduct experiments on CIFAR-10 and CIFAR-100. The channel sizes for the four residual blocks are 64, 128, 256, and 512, respectively. All the ReLU activations are substituted by the leaky integrate and fire (LIF) neurons. To make the network implementable for neuromorphic computing, we replace all the max pooling operations with average pooling. To enable instantaneous gradient calculation, we adopt the BN components that calculate the mean and variance statistics for each time step, not the total time horizon. Then for each iteration, a BN component is implemented for $T$ times, where $T$ is the number of total time steps.

We adopt VGG-11 \cite{simonyan2014very} to conduct experiments on DVS-Gesture and DVS-CIFAR10. As done for Resnet-18, we substitute all the max poolings with average poolings and use the time-step-wise BN. We remove two fully connected layers \cite{meng2022training,deng2022temporal,xiao2022online} to reduce the computation. A dropout layer \cite{srivastava2014dropout} is added behind each LIF neuron layer to bring better generalization. The dropout rates for DVS-Gesture and DVS-CIFAR10 are set to be 0.4 and 0.3, respectively.

\begin{table}[H]
	\caption{Training hyperparameters about optimization. ``WD'' means weight decay, ``LR'' means initial learning rate, and ``BS'' means batch size.}
	\label{table:parameter}
	\hspace{-0.7em}
	\begin{tabular}{l|cccc}
		\hline Dataset & Epoch & LR  & BS & WD\\
		\hline CIFAR-10   & 200 & $0.1$  & 128 & $5\times10^{-5}$  \\
		CIFAR-100 & 200 & $0.1$  & 128  & $5\times10^{-4}$ \\
		ImageNet (pre-train)  & 100 & $0.1$ & 256 & $1\times10^{-5}$ \\
		ImageNet (fine-tune) & 30 & $0.001$ & 256  & $0$ \\
		DVS-Gesture  & 300 & $0.1$ & 16 & $5\times10^{-4}$ \\
		DVS-CIFAR10  & 300 & $0.05$ & 128 & $5\times10^{-4}$ \\
		\hline
	\end{tabular}
\end{table}

We also adopt VGG-11 (WS) to conduct experiments on DVS-Gesture and achieve state-of-the-art performance. VGG-11 (WS) is a BN-free network that shares a similar architecture with VGG-11 introduced above. The difference between the two networks is that VGG-11 consists of the convolution-BN-LIF blocks while VGG-11 (WS) consists of the convolution-sWS-LIF blocks.

We adopt NF-ResNet-34, NF-ResNet-50, and NF-ResNet-101 to conduct experiments on ImageNet. Those networks are normalization-free ResNets introduced in \cref{sec:nfnet}. In Figs.~1 and 3 of the main content, the used network for ImageNet is NF-ResNet-34.

\section{Training Settings}

All the implementation is based on the PyTorch \cite{paszke2019pytorch} and SpikingJelly \cite{SpikingJelly} frameworks, and the experiments are carried out on one Tesla-V100 GPU or one Tesla-A100 GPU.

For CIFAR-10, CIFAR-100, and DVS-Gesture, we adopt the loss function proposed in \cite{deng2022temporal}:
\begin{equation} \label{eqn:appendix_loss}
\mathcal{L} = \frac{1}{T}\sum_{t=1}^{T} (1-\lambda) \ell_1(\mathbf{o}[t],y) + \lambda \ell_2(\mathbf{o}[t],V_{th}),
\end{equation}
where $T$ is the number of total time steps, $\ell_1$ is the cross entropy function, $\ell_2$ is the mean squared error (MSE) function, $\mathbf{o}[t]$ is the network output at the $t$-th time step, $y$ is the label, $\lambda$ is a hyperparameter taken as $0.05$, and $V_{th}$ is the spike threshold which is set to be $1$ in this work. For ImageNet, we use the same loss but simply set $\lambda=0$.
For DVS-CIFAR10, we also combine the cross entropy loss and the MSE loss, but the MSE loss does not act as a regularization term as in \cite{deng2022temporal}:
\begin{equation} 
\mathcal{L} = \frac{1}{T}\sum_{t=1}^{T} (1-\lambda) \ell_1(\mathbf{o}[t],y) + \lambda \ell_2(\mathbf{o}[t],y),
\end{equation}
where $\alpha$ is also taken as $0.05$. Our experiments show that such loss performs better than that defined in \cref{eqn:appendix_loss} for DVS-CIFAR10.

For all the tasks, we use SGD \cite{rumelhart1986learning} with momentum 0.9 to train the networks, and use cosine annealing \cite{loshchilov2016sgdr} as the learning rate schedule. Other hyperparameters about optimization are listed in \cref{table:parameter}. For ImageNet, we first train
the SNN with only 1 time step to get a pre-trained
model, and then fine-tune the model for multiple time steps.

In Section.~5.3 of the main content, we compare the proposed SLTT method and OTTT \cite{xiao2022online} following the same experimental settings as introduced in \cite{xiao2022online}. For both methods, we adopt the NF-ResNet-34 architecture for ImageNet and VGG-11 (WS) for other datasets. And the
total number of time steps for CIFAR-10, CIFAR-100, ImageNet,
DVS-Gesture, and DVS-CIFAR10 are 6, 6, 6, 20, and 10, respectively.

\section{Societal impact and limitations}
There is no direct negative societal impact since this work focuses on efficient training methods for SNNs. Compared with ANNs, the inference of SNNs requires less energy consumption and produces less carbon dioxide emissions. Our proposed method can further reduce energy consumption in the SNN training process. As for the limitation, the SLTT method cannot be equipped with some network techniques, such as batch normalization along the temporal dimension, since the proposed method is conducted in an online manner, which is biologically more plausible and more friendly for on-chip training. It may require the exploration of more techniques that are compatible with online learning of SNNs.

\end{document}